\def\eqref#1{equation~\ref{#1}}
\def\1{\bm{1}}
\DeclareMathAlphabet{\mathsfit}{\encodingdefault}{\sfdefault}{m}{sl}
\SetMathAlphabet{\mathsfit}{bold}{\encodingdefault}{\sfdefault}{bx}{n}
\crefname{appendix}{App.}{Apps.}
\Crefname{appendix}{App.}{Apps.}
\crefname{corollary}{Cor.}{Cors.}
\Crefname{corollary}{Cor.}{Cors.}
\crefname{section}{Sec.}{Secs.}
\Crefname{section}{Sec.}{Secs.}
\crefname{proposition}{Prop.}{Props.}
\Crefname{proposition}{Prop.}{Props.}
\theoremstyle{plain}
\newtheorem{theorem}{Theorem}[section]
\newtheorem{proposition}[theorem]{Proposition}
\newtheorem{lemma}[theorem]{Lemma}
\theoremstyle{definition}
\theoremstyle{remark}
\newtheorem{remark}{Remark}
\newcommand{\cmark}{\ding{51}} 
\newcommand{\xmark}{\ding{55}} 
\title{Measurement-Aligned Sampling for \\
Inverse Problems}
\author{Shaorong Zhang$^{1}$, \quad Rob Brekelmans $^{2}$, \quad Yunshu Wu$^{1}$, \quad Greg Ver Steeg$^{1}$ \\
$^1$University of California Riverside \quad $^2$ Vector Institute \\
\{szhan311, ywu380, gregoryv\}@ucr.edu,  \quad rob.brekelmans@vectorinstitute.ai
}
\begin{document}

\maketitle

\begin{abstract}
Diffusion models provide a powerful way to incorporate complex prior information for solving inverse problems. However, existing methods struggle to correctly incorporate guidance from conflicting signals in the prior and measurement, and often failed to maximizing the consistency to the measurement, especially in the challenging setting of non-Gaussian or unknown noise. To address these issues, we propose Measurement-Aligned Sampling (MAS), a novel framework for linear inverse problem solving that flexibly balances prior and measurement information. MAS unifies and extends existing approaches such as DDNM, TMPD, while generalizing to handle both known Gaussian noise and unknown or non-Gaussian noise types. Extensive experiments demonstrate that MAS consistently outperforms state-of-the-art methods across a variety of tasks, while maintaining relatively low computational cost. 
\end{abstract}

\section{Introduction}


Inverse problems are prevalent in image restoration (IR) tasks, including super-resolution, inpainting, deblurring, colorization, denoising, and JPEG restoration \citep{chung2022diffusion, kawar2022denoising, saharia2022image, wang2022zero, lugmayr2022repaint, mardani2023variational, song2023pseudoinverse, kawar2022jpeg}. Solving an inverse problem involves recovering an unknown original image $x_0 \in \mathbb{R}^n$ based on information from a prior distribution, $\pi(x_0)$, and noisy measurements $y \in \mathbb{R}^m$ generated through a forward model:

\begin{equation}
    y = \mathcal{H}(x_0) + \epsilon.
\end{equation}

Here ${\epsilon \in \mathbb{R}^m}$ represents measurement noise, $x_0 \in \mathbb{R}^d$ is drawn from data distribution $\pi_0(x_0)$, $\mathcal{H}:\mathbb{R}^d\mapsto\mathbb{R}^m$ is the measurement function, and $y \in \mathbb{R}^m$ denotes the degraded measurement or observed image. A useful motivating example is a high-resolution image $x_0$, with a noisy degraded image $y$ and a known corruption process.


Pretrained diffusion and flow models offer a prior distribution $\pi_0(x_0)$ that greatly aids in solving inverse problems. Methods such as DPS \citep{chung2022diffusion}, $\Pi$GDM \citep{song2023pseudoinverse}, and TMPD \citep{boys2023tweedie} estimate conditional scores directly from the measurement model by leveraging score decomposition to guide each diffusion sampling step. In contrast, approaches like FPS \citep{dou2024diffusion}, DAPS \citep{zhang2024improving}, MPGD \citep{he2023manifold}, and optimization-based methods \citep{song2023solving,zhu2023denoising,li2024decoupled,wang2024dmplug} align denoiser outputs directly with measurements, thereby avoiding backpropagation through the U-Net. Although DAPS achieves state-of-the-art performance—outperforming methods that require backpropagation—it still requires more than 100 gradient descent iterations per diffusion step, making it far more computationally expensive compared to methods such as DDNM \citep{wang2022zero} and DDRM \citep{kawar2022denoising}. This highlights the promise of developing approaches that avoid both backpropagation through U-Net and excessive optimization steps, while still attaining state-of-the-art performance.


Moreover, the above approaches lack the ability to effectively handle unknown or non-Gaussian noise. In practical settings, noise frequently deviates from Gaussian assumptions—exhibiting characteristics like salt-and-pepper, periodic, or Poisson distributions—or is completely unknown. Additionally, the forward measurement operator may be uncertain or inaccurately specified. Effectively addressing inverse problems under these more general and realistic conditions remains an open and challenging research area. 



Our main contributions are summarized as follows:

\begin{itemize}
\item We propose \textit{Measurement-Aligned Sampling} (MAS), a novel framework for solving linear inverse problems. MAS provides both probabilistic and optimization perspectives and generalizes approaches such as DDNM and TMPD for linear inverse problems. Furthermore, our proposed `overshooting' technique achieves superior restoration quality compared to DDNM across various inverse problem scenarios.
\item We develop new techniques that \textit{maximize consistency with the measurement}, enabling robust handling of both Gaussian noise and unknown noise sources. Moreover, our novel parameterization scheme allows us to effectively handle noisy inverse problems with unknown or non-Gaussian noise structures and even non-differentiable measurements, such as JPEG restoration, without requiring explicit knowledge of the forward operator or noise level. The comparison of method applicability across different inverse problems is shown in \cref{tab:method_app}.
\item Our experiments show that MAS enables robust and efficient image restoration, consistently outperforming baselines across Gaussian, non-Gaussian, and non-differentiable degradations (see \cref{fig:restoration} and experiments in \cref{sec:experiments}).
\end{itemize}

\begin{figure}[t]
    \centering
    
\includegraphics[width=1.\linewidth]{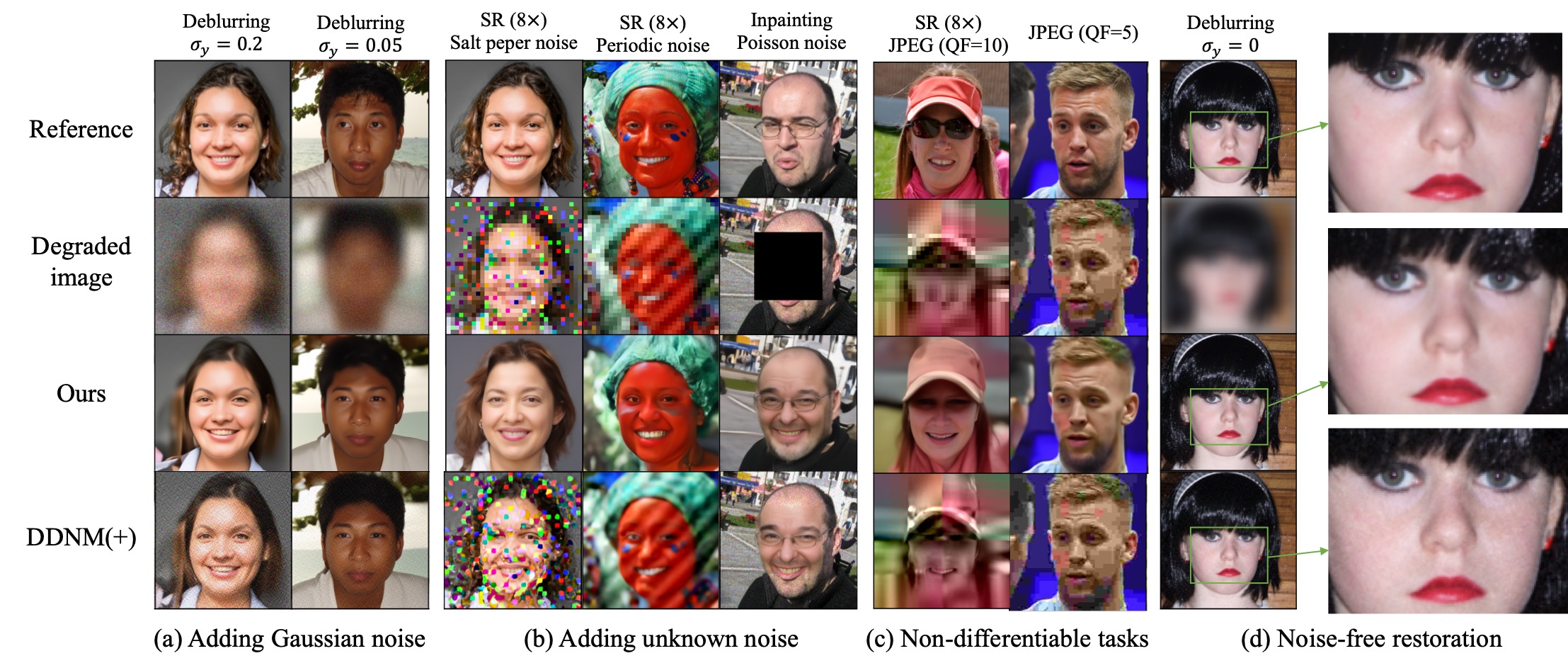}
    \caption{Solving various inverse problems using unconditional diffusion models. Our model demonstrates better robustness with unknown noise and strong Gaussian noise.}
    
\label{fig:restoration}
\end{figure}

\begin{table}[t]
\caption{Comparison of method applicability across different inverse problems.}
\label{tab:method_app}
    \centering
    \resizebox{1.\textwidth}{!}{
    \begin{tabular}{cccccccc}
    \toprule
    Inverse Problem & Noise strength  & DDNM &  DDRM &  $\Pi$GDM & DAPS & RED-Diff & MAS (ours) \\
     \midrule
    Linear + noise free & - & \cmark & \cmark & \cmark & \cmark & \cmark & \cmark \\
    Linear + Gaussian noise &  Known & \cmark & \cmark & \cmark & \cmark & \cmark & \cmark  \\
    Linear + non-Gaussian noise &  Unknown & \xmark & \xmark & \xmark & \xmark & \xmark & \cmark  \\
    \midrule
    JPEG / Quantization restoration & Known & \xmark & \xmark & \cmark & \xmark & \xmark & \cmark \\
    JPEG / Quantization restoration & Unknown & \xmark & \xmark & \xmark & \xmark & \xmark & \cmark \\
    \bottomrule
    \end{tabular}}
    
\end{table}

\section{Background}


Given training dataset $\mathcal{D}=\{ x_0^{i} \}_{i=1}^N$ from target distribution $\pi_0(x_0)$, $x_0^{i} \in \mathbb{R}^d$, the goal of generative modeling is to draw new samples from $\pi_0$. In the context of conditional generation, suppose that we have data samples from a joint distribution $(x_0^{i},y) \sim \pi(x_0, y)$, where $x_0$, $y$ are dependent, and $y$ could be class labels or text information, for example.

For conditional generative modeling, we seek to draw new 
 samples from $\pi(x_0 \mid y)$ for a given condition $y$. Conditioned flows \citep{zheng2023guided} build a 
marginal probability
path $p_{t \mid y}$ 
using a mixture of interpolating densities: 
$p_{t \mid y}(x_t \mid y) = \int p_t(x_t \mid x_0) \pi( x_0 \mid y) dx_T$, where $p_t(\cdot \mid x_0)$ is a probability path interpolating between noise and a single data point $x_T$. In general, the conditional kernel $p_t(x_t \mid x_0)$ is given by a Gaussian distribution: $p_t(x_t \mid x_0) = \mathcal{N} (x_t; \alpha_t x_0, \sigma_t^2 \mathbb{I})$, where $\mathcal{N}$ is the Gaussian kernel, $\alpha_t, \sigma_t$ are differentiable functions. Then we can sample from the conditional distribution $p_{0 \mid y}(x_0 \mid y)$ by simulating a stochastic process $p_{t \mid y} (x_t \mid y)$ from time $t = T$ to $t = 0$. Although different sampling methods can be chosen, generally, the iteration follows the form:




\begin{equation}
\label{eq:reverse_iter}
    x_{t - \Delta t} \sim \mathcal{N} (a_t m_{0 \mid t, y} + b_t x_t, c_t^2 \mathbb{I}).
\end{equation}

where $m_{0 \mid t, y} = \mathbb{E}[x_0 \mid x_t, y]$ is the idea conditional denoiser, $a_t$, $b_t$ and $c_t$ are parameters that depends on samplers. For instance, $x_{t-\Delta t}  \sim \mathcal{N} (\alpha_{t-\Delta t} m_{0 \mid t, y}, \sigma_{t-\Delta t} \mathbb{I})$ is a valid DDIM sampler. In the implementation of conditional diffusion models, a denoiser is trained to approximate $m_{0 \mid t, y}$. However, when only an unconditional denoiser $m_{0 \mid t} = \mathbb{E}[x_0 \mid x_t]$ is available, training-free conditional inference methods are employed.


\textbf{Diffusion Posterior Sampling (DPS) and its variants}. Given unconditional denoiser $\mathbb{E}[x_0 \mid x_t]$, training-free conditional inference methods enable the approximation of the ideal conditional denoiser $\mathbb{E}[x_0 \mid x_t, y]$ \citep{pokle2023training}:
\begin{equation}
\label{eq:denoiser_decomp}
     \mathbb{E}[x_0 \mid x_t, y] =  \mathbb{E}[x_0 \mid x_t] + \frac{\sigma_t^2}{\alpha_t} \nabla_{x_t} \log p(y \mid x_t).
\end{equation}

Since $\nabla_{x_t} \log p(y \mid x_t)$ is generally intractable, various approaches have been developed to approximate it. 

For linear inverse problems, where the forward model is given by: $y = Hx_0 + \epsilon$, $\epsilon \sim \mathcal{N}(0, \sigma_y^2 \mathbb{I})$. Tweedie Moment Projected Diffusion (TMPD) \citep{boys2023tweedie} assume $p(x_0 \mid x_t)$ as a Gaussian: $p(x_0 \mid x_t) \approx \mathcal{N} (m_{0 \mid t}, C_{0 \mid t})$, where $m_{0 \mid t} (x_t) := \mathbb{E}[x_0 \mid x_t]$ is the ideal unconditional denoiser, $C_{0 \mid t} (x_t) := \mathbb{E}[(x_0 - m_{0\mid t})(x_0 - m_{0 \mid t})^T \mid x_t]$ is the covariance of $x_0 \mid x_t$. Then the posteroir mean $\mathbb{E}[x_0 \mid x_t, y]$ admits an explicit closed-form solution:

\begin{equation}
\label{eq:tmpd}
    \mathbb{E}[x_0 \mid x_t, y] = m_{0 \mid t} + C_{0 \mid t} H^T (HC_{0 \mid t}H^T + \sigma_y^2 \mathbb{I})^{-1} (y - Hm_{0 \mid t})
\end{equation}

The covariance $C_{0 \mid t}$ could be calculated via gradient go through the denoiser: $C_{0 \mid t} = \frac{\sigma_t^2}{\alpha_t} \nabla_{m_{0 \mid t}} (x_t)$.

\textbf{Optimization based methods}. Unlike DPS guarantees that sampling is strictly from the conditional distribution, $p(x_0 \mid y)$, optimization-based approaches \citep{zhu2023denoising, li2024decoupled, wang2024dmplug} place more emphasis on the alignment with the measurement and the prior, which takes the following iteration:

\begin{subequations}
\label{eq:opt_iter}
\begin{align}
\label{eq:objective}
    x_0^* &= \arg \min_{x_0} \; \bigl\| x_0 - m_{0 \mid t} \bigr\|^2 
    + \lambda_t \, \bigl\| y - \mathcal{H}(x_0) \bigr\|^2, \\
    x_{t-\Delta t} &\sim \mathcal{N}\!\left( a_t x_0^* + b_t x_t, \, c_t^2 \right).
\end{align}
\end{subequations}

where $\lambda_t$ is a manually designed hyperparameter and $\mathcal{H}(\cdot)$ is the nonlinear forward operator. The iteration of optimization based methods could be seen as replacing $m_{0 \mid t, y}$ in \cref{eq:reverse_iter} to $x_0^*$ in \cref{eq:objective}.

\section{Methodology}\label{sec:method}



For optimization based methods, the data-consistency loss with respect to the measurement $y$ is treated uniformly across all directions of the measurement space. However, for inverse problems it is often advantageous to introduce a weighting matrix that reflects the geometry of the forward operator \citep{tarantola2005inverse}. To this end, we propose Measurement-Aligned Sampling (MAS), which incorporates such a weighting into the optimization. As we demonstrate in \cref{sec:experiments}, this alignment leads to significant improvements in reconstruction quality.

\subsection{Measurement Aligned Sampling}

In this work, we generalize the objective in \cref{eq:objective} as

\begin{equation}
\begin{aligned}
\label{eq:opt_per}
    x_0^* 
    &= \arg \min_{x_0} \Vert x_0 - m_{0 \mid t} \Vert^2 +\Vert y - Hx_0 \Vert_{W^{-1}}^2. 
\end{aligned}
\end{equation}

where $W \coloneqq \eta_1 HH^{\mathsf T} + \eta_2 \mathbb{I}$ (with $\eta_1\geq0, \eta_2 \geq 0$) is the weighted matrix and serves as a metric that balances measurement fidelity and prior regularization, where $\| z \|^2_{A} = z^T A z$.

When $\eta_1 = 0$ and $\eta_2 > 0$, corresponding to the classical Tikhonov (ridge) regularization, where $\eta_2$ controls the trade-off between fitting the measurements $y$ and staying close to the prior $m_{0\mid t}$. When $\eta_1 > 0$ and $\eta_2 = 0$, the data term becomes weighted by $(HH^{\top})^{-1}$, a Mahalanobis-type distance that emphasizes alignment along directions where $H$ is weak (small singular values), thereby regularizing ill-posed components of the inverse problem. \emph{The balance between $\eta_1$ and $\eta_2$ plays a crucial role in reconstruction quality}, as we show in our experiments.

Finally, \cref{eq:opt_per} admits a unique closed-form solution obtained by setting the gradient to zero:



\begin{equation}
\begin{aligned}
    \label{eq:pm}
x_0^*  &= Y^{-1}[m_{0 \mid t} + H^{\mathsf T} W^{-1}y] \\
\text{where} \quad 
W & \coloneqq \eta_1 HH^{\mathsf T} + \eta_2 \mathbb{I}, \qquad Y \coloneqq \mathbb{I} + H^{\mathsf T} W^{-1}H,
\end{aligned}
\end{equation}

In practice, computing the inverse $W^{-1}$ and  $Y^{-1}$ in \cref{eq:pm} naively can be computationally expensive. Instead, one can employ singular value decomposition (SVD) for more efficient computation; see \cref{sec:eff_cal} for details.

    

\begin{remark}
[\textbf{Connection with DDNM} \citep{wang2022zero}]
\label{remark:con_ddnm}
As $\eta_2  = 0$ and $\eta_1 \rightarrow 0$, $x_0^* \rightarrow \tilde{x}_0^{\text{DDNM}} := m_{0 \mid t} + H^{\dagger} (y - H m_{0 \mid t})$. Thus, in this limiting case, MAS recovers DDNM. 
\end{remark}

\begin{remark}[\textbf{Connection with optimization methods}]
For the case where $\eta_1 = 0, \eta_2 >0$,
\cref{eq:opt_per} reproduces optimization approaches, such as Resample \citep{song2023solving}, DiffPIR \citep{zhu2023denoising}, DCDP \citep{li2024decoupled}, DMPlug \citep{wang2024dmplug}.
\end{remark}




\subsection{Probabilistic interpretation}\label{sec:shallow_guidance}

We can interpret $x_0^*$ in \cref{eq:pm} as $\mathbb{E}[x_{\epsilon} \mid x_t, y]$, where $x_{\epsilon} \approx x_0$ with perturbation variance $\sigma_{\epsilon}^2$ chosen to be sufficiently small so that $p(x_0 \mid x_\epsilon) \approx \mathcal{N} (x_\epsilon, \sigma_\epsilon^2 \mathbb{I})$. This formulation introduces an additional hyperprior, which—as demonstrated in our experiments—proves beneficial in addressing inverse problems. 

\begin{algorithm}[t]

\caption{\small Measurement-Aligned Sampling (MAS) for inverse problems.}
\label{algo:mas}
\begin{algorithmic}[1]
\small
\STATE \textbf{Input:} measurement $y$, forward operator $H (\cdot)$, pretrained DM $\epsilon_{\theta} (\cdot)$, number of diffusion step $N$, diffusion schedule $\alpha_t$ and $\sigma_t$, objective parameters $\eta_1$, $\eta_2$.
\STATE \textbf{Initialization}: $x_N \sim \mathcal{N} (0, \mathbb{I})$
\FOR{$n=N$ to $1$}
    \STATE $\hat{x}_0 \leftarrow [{x_n - \sigma_n \epsilon_{\theta} (x_n, n)}]/{\alpha_n}$ \hfill $\triangleright$ Obtain predicted data $\mathbb{E}[x_0 \mid x_n]$
    \STATE $x_0' = Y^{-1} [\hat{x}_0 + H^{\mathsf T}W^{-1}y]$ \hfill $\triangleright$ Calculating posterior mean $\mathbb{E}[x_\epsilon \mid x_n, y]$
    \STATE $x_{n-1} \sim \mathcal{N} ( \alpha_{n-1} x_0', \sigma_{n-1} \mathbb{I})$ \hfill $\triangleright$ Forward diffusion step
\ENDFOR
\STATE \textbf{Output} $x_0$
\end{algorithmic}
\end{algorithm}

Given the measurement model
$p(y \mid x_0) = \mathcal{N}(H x_0, \sigma_y^2 \mathbb{I})$ and conditional $  
p(x_0 \mid x_{\epsilon}) = \mathcal{N}(x_{\epsilon}, \sigma_{\epsilon}^2 \mathbb{I})$, the induced distribution over the measurement conditioned on $x_{\epsilon}$ takes the explicit form:
\begin{equation}
    p(y \mid x_{\epsilon}) = \mathcal{N}\!\left(Hx_{\epsilon},\, \sigma_y^2 \mathbb{I} + \sigma_{\epsilon}^2 HH^{\mathsf T}\right).
\end{equation}

Notably, the likelihood $p(y \mid x_{\epsilon})$ shares the same mean as $p(y \mid x_0)$, but with a generalized variance inflated by a term depending on $H$. Since both $p(y \mid x_{\epsilon})$ and $p(y \mid x_0)$ are Gaussian, the posterior distribution admits a closed-form expression. In particular, the posterior mean $\mathbb{E}[x_0 \mid x_t, y]$ can be computed via Bayesian linear regression, as stated in \cref{prop:blr}.


\begin{restatable}[Bayesian Linear Regression]{proposition}{propgaussian}
\label{prop:blr}
Suppose $p(y \mid x_{\epsilon}) = \mathcal{N}\!\left(Hx_{\epsilon},\,R\right)$, $R \coloneqq \sigma_y^2 \mathbb{I} + \sigma_{\epsilon}^2 HH^{\mathsf T}$ 
and $p(x_{\epsilon} \mid x_t) \approx \mathcal{N}\!\left(m_{0 \mid t},\, C_{0 \mid t}\right)$. Then the posterior is Gaussian, with mean given by
\begin{equation}
\label{eq:blr}
\mathbb{E}[x_{\epsilon} \mid x_t, y]
= \Big(C_{0 \mid t}^{-1} + H^{\mathsf T} R^{-1} H \Big)^{-1}
   \Big(C_{0 \mid t}^{-1} m_{0 \mid t} + H^{\mathsf T} R^{-1} y \Big).
\end{equation}
\end{restatable}

As we set $C_{0 \mid t} = r_t^2 \mathbb{I}$, $\eta_1 \coloneqq {\sigma_\epsilon^2}/{r_t^2}$ and $\eta_2 \coloneqq \sigma_y^2 / r_t^2$, $\mathbb{E}[x_\epsilon \mid x_t, y]$ in \cref{eq:blr} is equivalent to $x_0^*$ in \cref{eq:pm}, which provides a probabilistic perspective for MAS.

\begin{remark}[\textbf{Connection to TMPD \citep{boys2023tweedie}}]
Setting $\sigma_\epsilon=0$ reduces the posterior mean in \cref{eq:blr} to that of TMPD in \cref{eq:tmpd}.
\end{remark}

\begin{remark}[\textbf{`Overshooting' trick}]
    Theoretically, $\eta_1 \geq 0$ since $\eta_1\coloneqq \sigma_{\epsilon}^2 / r_t^2$, however, the posterior mean in \cref{eq:pm} allows negative $\eta_1$. As illustrated in \cref{fig:toy}, negative $\eta_1$ produces an overshooting effect, drawing $x_0^*$ even further toward alignment with the measurement $y$ than prescribed by DDNM. Interestingly, in our experiments this overshooting effect leads to improved reconstruction quality. A more detailed discussion is provided in \cref{sec:eff_cal}.
\end{remark}

\begin{figure}
    \centering
    \includegraphics[width=1.0\linewidth]{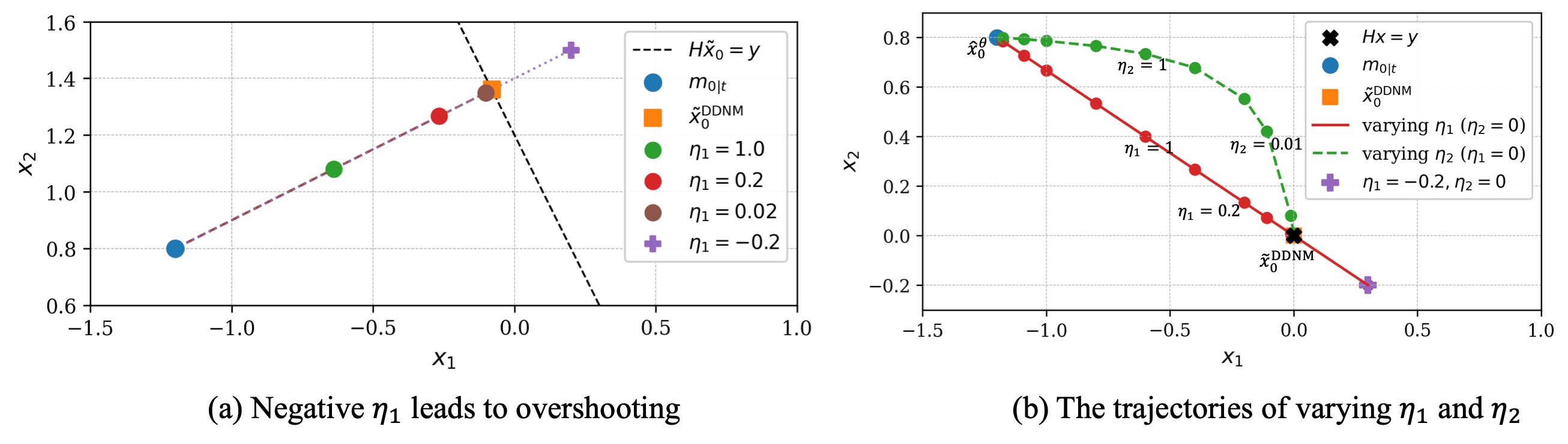}
    \caption{2D illustration of the influence of parameters $\eta_1$ and $\eta_2$. Dots represent $x_0^*$, calculated via \cref{eq:pm}. (a) Parameter $\eta_1$ controls the trade-off between $m_{0 \mid t}$ and $\tilde{x}_0^{\text{DDNM}}$: as $\eta_1 \rightarrow \infty$, the posterior mean $x_0^*$ approaches $m_{0 \mid t}$; as $\eta_1 \rightarrow 0$, it converges to $\tilde{x}_0^{\text{DDNM}}$. (b) Adjusting $\eta_2$ alters the posterior trajectory differently from varying $\eta_1$.}
    \label{fig:toy}
\end{figure}


\section{Maximizing the consistency for noisy inverse problems}\label{sec:advanced}



\subsection{Why previous methods failed to maximize the consistency?}






DDNM highlighted that calculating the posterior sampling $\tilde{x}_0^{\text{DDNM}} = m_{0 \mid t} + H^{\dagger} (y - H m_{0 \mid t})$ can inadvertently introduce additional noise into $x_t$, if $y$ is noisy. For instance, consider a simple forward model: $y=x_0 + \epsilon_y$, where both $H$ and $H^{\dagger}$ are identity matrix, i.e., $H = H^{\dagger} = \mathbb{I}$, then $\tilde{x}_0^{\text{DDNM}} = y = x_0 + \textcolor{blue}{\epsilon_y}$. Here $\epsilon_y$ is the additional noise introduced to $\tilde{x}_0^{\text{DDNM}}$, and will be further introduced into $x_{t-\Delta t}$. We argue that this issue is not unique to DDNM, but may also arise in TMPD \citep{boys2023tweedie}, DAPS \citep{zhang2024improving}, as well as in optimization-based methods \citep{zhu2023denoising, li2024decoupled}.

For MAS and under this same example, $y=x_0 + \epsilon_y$, calculating $x_0^*$ (\cref{eq:pm}) yields


\begin{equation}
\label{eq:pm_simple}
    x_0^*
  = m_{0 \mid t} +   \frac{y - m_{0 \mid t}}{\eta_1 + \eta_2 + 1} = m_{0 \mid t} +   \frac{x_0 - m_{0 \mid t}}{\eta_1 + \eta_2 + 1} + \textcolor{blue}{\frac{\epsilon_y}{\eta_1 + \eta_2 + 1}},
\end{equation}

where \textcolor{blue}{$\epsilon_y / (\eta_1 + \eta_2 + 1)$} is the additional noise introduced to $x_0^*$. A delicate balance arises from the fact that increasing either $\eta_1$ or $\eta_2$ will not only reduce the influence of the (unknown) noise term $\epsilon_y$, but also reduce the consistency with the measurement $y$ in general. To address this issue, we propose two approaches for addressing known Gaussian noise (\cref{sec:known_gaussian}) and unknown noise \cref{sec:unknown_noise}.

\subsection{Addressing Gaussian noise with known variance}
\label{sec:known_gaussian}



To handle Gaussian noise with known variance and $H= \mathbb{I}$, we modify \cref{eq:pm_simple} and \cref{eq:reverse_iter} as:

\begin{equation}
    x_0^* = m_{0 \mid t} + \textcolor{blue}{\lambda_t}  \frac{y - m_{0 \mid t}}{\eta_1 + \eta_2 + 1},  \quad
    x_{t - \Delta t} \sim \mathcal{N} (a_t \tilde{x}_0 + b_t x_t, \textcolor{blue}{\gamma_t} \mathbb{I}).
\end{equation}

Here $\lambda_t$ and $\gamma_t$ are two parameters that can control the total noise introduced to $x_{t-\Delta t}$. In our work, we adopt similar two principles as DDNM$+$ \citep{wang2022zero} for handling Gaussian noise: (i) the total noise introduced in $x_{t -\Delta t}$ should be $\mathcal{N} (0, c_t^2 \mathbb{I})$ to conform to the correct distribution of $x_{t-\Delta t}$ in \cref{eq:reverse_iter}; (ii) $\lambda_t$ should be as close to $1$ as possible to maximize the preservation of $x_0^*$. As $\epsilon_y \sim \mathcal{N} (0, \sigma_y^2 \mathbb{I})$, principle (i) and principle (ii) are equivalent to: 
\begin{equation}
\begin{aligned}
    \left( \frac{a_t \lambda_t \sigma_y}{\eta_1 + \eta_2 + 1} \right)^2 + \gamma_t &= c_t^2, \quad 
    \lambda_t = 
    \begin{cases}
        1, & c_t \geq \dfrac{a_t \sigma_y}{\eta_1 + \eta_2 + 1} \\[4pt]
        \dfrac{c_t (\eta_1 + \eta_2 + 1)}{a_t \sigma_y}, & c_t < \dfrac{a_t \sigma_y}{\eta_1 + \eta_2 + 1}
    \end{cases}.
\end{aligned}
\end{equation}

Derivations for more general forms of $H$ can be found in \cref{sec:sm}.
Note that the revision does not introduce additional parameters. 

\subsection{Addressing unknown noise and non-differentiable measurements}
\label{sec:unknown_noise}

\begin{figure}
    \centering
\includegraphics[width=1.\linewidth]{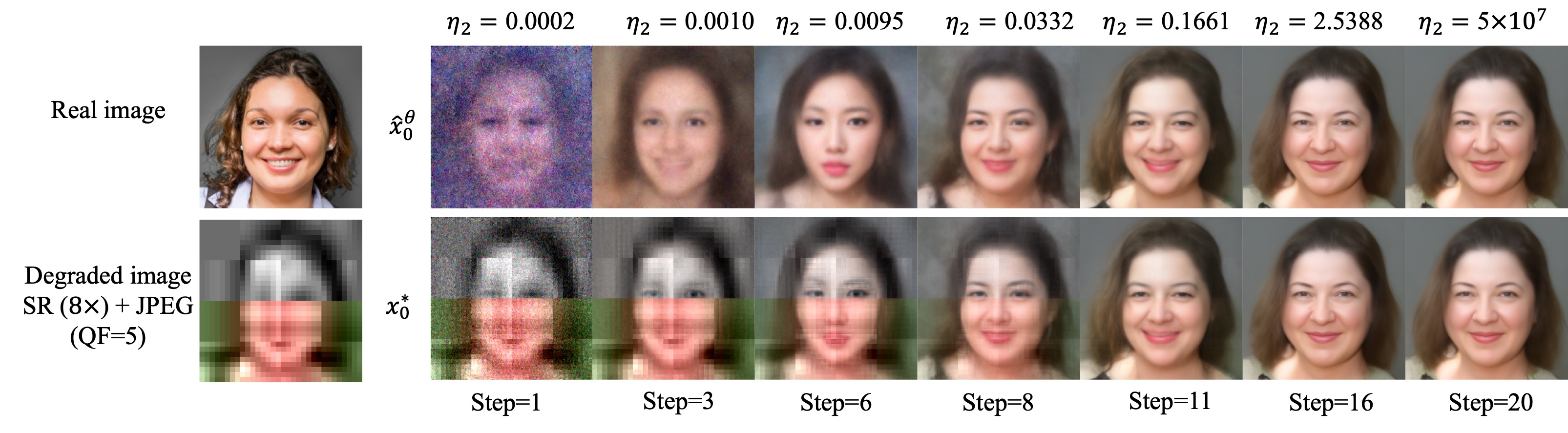}
    \caption{The sample process of solving inverse problems with unknown noise, where $\hat{x}_0^{\theta}\approx m_{0 \mid t}$ is the denoising output. Here we set $\eta_1 = 0$ and $\eta_2 = 0.5 a_t/c_t$.}
    \label{fig:sample_process}
\end{figure}

\textbf{Addressing unknown noise or non-Gaussian noise.} When the measurement noise $\sigma_y$ is non-Gaussian or unknown, it becomes difficult to ensure that the total noise in $x_{t-\Delta t}$ follows the desired distribution $\mathcal{N} (0, c_t^2 \mathbb{I})$, To address this, we continue to sample $x_{t-\Delta t}$ using \cref{eq:reverse_iter}. Next, the noise introduced to $x_{t-\Delta t}$ is the sum of two components:

\begin{equation}
    \epsilon_{\text{ng}} = (a_t \lambda_t \epsilon_y) / (\eta_1 + \eta_2 + 1), \quad \epsilon_{\text{g}} \sim \mathcal{N} (0, c_t^2 \mathbb{I}).
\end{equation}

Here $\epsilon_{\text{ng}}$ is related to the noise introduced by unknown noise $\epsilon_y$, while $\epsilon_{\text{g}}$ is the noise added by the diffusion process. To minimize the effect of unknown noise $\epsilon_{\text{ng}}$, it is desirable for $\eta_1 + \eta_2 + 1$ to be sufficiently large. However, smaller values of $\eta_1$ and $\eta_2$ result in better consistency with the measurement $y$, as illustrated in \cref{fig:toy}. To balance this trade-off, we propose using a small $\eta_1 + \eta_2$ during the early stages of sampling to fully exploit measurement information. As sampling progresses, $\eta_1 + \eta_2$ should be gradually increased to suppress the impact of $\epsilon_{\text{ng}}$. The underlying intuition is that, in the early sampling stage, $x_t$ is still highly noisy and $a_t \approx 0$, so the influence of $\epsilon_{\text{ng}}$ is negligible even when $\eta_1 + \eta_2$ is small. As shown in \cref{fig:sample_process}, $x_0^*$ is initially more aligned with the degraded observation, but progressively shifts toward $m_{0 \mid t}$ as sampling evolves. 

For a general degradation operator $H$, we recommend setting $\eta_2 = k a_t / c_t$, where $k$ is a constant determined by the characteristics of the introduced noise. The rationale behind this design choice is further detailed in \cref{sec:sm}.






\textbf{Addressing non-differentiable measurements.} For solving inverse problems with non-differentiable measurements such as JPEG restoration and quantization, the degraded images can be viewed as "noisy images" with unknown noise, modeled by $y = x + \epsilon_y$. In these scenarios, our proposed strategy naturally extends by treating the unknown degradations as implicit noise.

\section{Experiments}
\label{sec:experiments}

\subsection{Experimental setup}
\textbf{Dataset}. We evaluate the effectiveness of our proposed approach on FFHQ $256 \times 256$ \citep{karras2019style} and ImageNet $256 \times 256$ \citep{deng2009imagenet}. Following DAPS \citep{zhang2024improving}, we test on the same subset of 100 images for both datasets.

\textbf{Pretrained models and baselines}. We utilize the pre-trained checkpoint \citep{chung2022diffusion} on the FFHQ dataset and the pre-trained checkpoint \citep{dhariwal2021diffusion} on the Imagenet dataset. We compare our methods with the following baselines: DCDP \citep{li2024decoupled}, FPS \citep{dou2024diffusion}, DiffPIR \citep{zhu2023denoising}, DDNM \citep{wang2022zero}, DDRM \citep{kawar2022denoising}, $\Pi$GDM \citep{song2023pseudoinverse}, RedDiff \citep{mardani2023variational},  DAPS \citep{zhang2024improving}.

\textbf{Metrics}. Following previous work \citep{chung2022diffusion, kawar2022denoising}, we report Fréchet Inception Distance (FID) \citep{heusel2017gans}, Learned Perceptual Image Patch Similarity (LPIPS) \citep{zhang2018unreasonable}, Peak Signal-to-Noise Ratio (PSNR), and Structural SIMilarity index (SSIM). 

\textbf{Tasks}. (1) We evaluate performance on the following linear inverse problems: super-resolution (bicubic filter), deblurring (uniform kernel of size $9$), inpainting (with a box mask), inpainting (with a 70\% random mask), and colorization.
(2) We consider two unknown noise types: salt-and-pepper noise ($10 \%$ pixels set randomly to $\pm 1$) and periodic noise (sinusoidal pattern with amplitude $0.2$ and frequency $5$).
(3) We address JPEG restoration with quality factors $\mathrm{QF}=2$ and $\mathrm{QF}=5$. (4). For quantization, we consider the challenging case of 2-bit quantization.

\subsection{Ablation study}

\begin{figure}[t]
    \centering
\includegraphics[width=1.\linewidth]{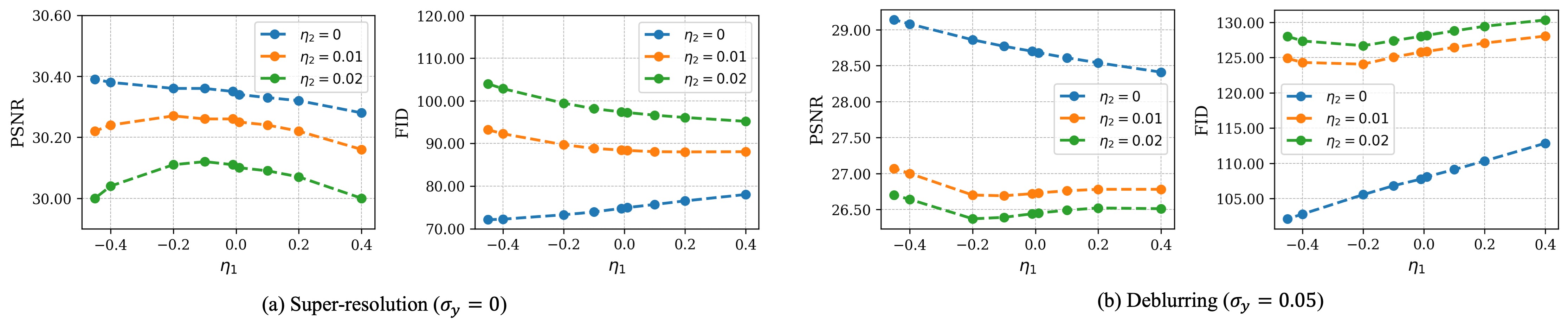}
\caption{Ablation study of $\eta_1$ and $\eta_2$ on solving super-resolution and deblurring. We set NFE=20 for all tasks.}
    \label{fig:ablation}
\end{figure}

\paragraph{Ablation Study on $\eta_1$ and $\eta_2$.}
We conduct ablation studies on parameters $\eta_1$ and $\eta_2$ using two inverse problems: super-resolution (noise-free, $\epsilon_y = 0$) and deblurring (noisy, $\epsilon_y \sim \mathcal{N}(0, \sigma_y^2 \mathbb{I})$). Results presented in \cref{fig:ablation} demonstrate that for noise-free super-resolution, the highest PSNR and lowest FID scores are achieved by setting $\eta_2 = 0$ and a negative $\eta_1 = -0.45$. This indicates that appropriate "overshooting" enhances restoration quality. For the noisy deblurring task, negative $\eta_2$ yields an improvement of more than 0.5 in PSNR and a reduction of over 5 in FID, further confirming the benefit of overshooting. 


\begin{table*}[!t]
  \centering
  \caption{Quantitative evaluation of solving image restoration FFHQ (left) and ImageNet (right), with Gaussian noise (known variance, $\sigma_y = 0.05$).}
  \label{tab:noisy_gaussian}
  \resizebox{0.95 \textwidth}{!}{%
  \begin{tabular}{@{}llcccccccc@{}}
  \toprule
        \multirow{2}{*}{\textbf{Task}}  & \multirow{2}{*}{\textbf{Method}}
      & \multicolumn{4}{c}{\textbf{FFHQ}} & \multicolumn{4}{c}{\textbf{ImageNet}} \\ \cmidrule(lr){3-6}\cmidrule(lr){7-10}
      & & \textbf{PSNR} $\uparrow$ &  \textbf{SSIM} $\uparrow$  & \textbf{LPIPS} $\downarrow$ & \textbf{FID} $\downarrow$
        & \textbf{PSNR} $\uparrow$ &  \textbf{SSIM} $\uparrow$ & \textbf{LPIPS} $\downarrow$ & \textbf{FID} $\downarrow$ \\ \midrule
    
    \multirow{9}{*}{SR $4\times$} 
    & DPS & $25.86$ & $0.753$ & $0.269$ & $81.07$ & $21.13$ & $0.489$ & $0.361$ & $106.32$\\
    & DDRM & $26.58$ & $0.782$  & $0.282$ & $79.25$ & $22.62$ & $0.521$ & $0.324$ & $103.85$\\
    & DDNM & $28.03$ & $0.795$ & $0.197$ & $64.62$ & $23.96$ & $0.604$ & $0.475$ & $98.62$ \\
    & DCDP & $28.66$ & $0.807$ & $0.178$ & $53.81$ & -- & -- & -- & -- \\
    & FPS-SMC & $28.42$ & $0.813$ & $0.204$ & $49.25$ & $24.82$ & $0.703$ & $0.313$ & $97.51$ \\
    & DiffPIR & $26.64$ & -- & $0.260$ & $65.77$ & $23.18$ & -- & $0.371$ & $106.32$ \\
    & RED-Diff & $28.63$ & $0.748$ & $0.288$ & $126.78$ & $25.43$ & $0.639$ & $0.336$ & $153.37$ \\
    & DAPS  & $29.07$ & $0.818$ & $0.177$ & $51.44$ & $25.89$ & $0.694$ & $0.276$ & $83.57$ \\
    & MAS & $\mathbf{30.56}$ & $\mathbf{0.865}$ & $\mathbf{0.131}$ & $61.38$ & $\mathbf{27.20}$ & $\mathbf{0.751}$ & $\mathbf{0.215}$ & $88.61$ \\
    \midrule
    
    \multirow{8}{*}{Inpaint (Box)} 
    & DPS & $22.51$ & $0.792$ & $0.209$ & $61.27$ & $18.94$ & $0.722$ & $0.257$ & $126.52$\\
    & DDRM & $22.26$ & $0.801$ & $0.207$ & $78.62$ & $18.63$ & $0.733$ & $0.254$ & $116.37$ \\
    & DDNM & $24.47$ & $0.837$ & $0.235$ & $46.59$ & $21.64$ & $0.748$ & $0.319$ & $103.97$\\
    & DCDP & $23.89$ & $0.760$ & $0.163$ & $45.23$ & -- & -- & -- & --\\
    & FPS-SMC & $24.86$ & $0.823$ & $0.146$ & $48.34$ & ${22.16}$ & $0.726$ & $0.208$ & $111.58$ \\
    & RED-Diff & $24.68$ & $0.767$ & $0.175$ & $86.78$ & $21.32$ & $0.728$ & $0.247$ & $123.55$ \\
    & DAPS  & $24.07$ & $0.814$ & $0.133$ & $43.10$  & $21.43$ & $0.725$ & $0.214$ & $109.85$ \\
    & MAS & $\mathbf{24.95}$ & $\mathbf{0.879}$& $\mathbf{0.082}$ & $\mathbf{37.67}$ & $21.15$ & $\mathbf{0.817}$ & $\mathbf{0.168}$ & $\mathbf{95.96}$ \\
    \midrule
    
    \multirow{7}{*}{Inpaint (Random)} 
    & DPS & $25.46$ & $0.823$ & $0.203$ & $69.20$ & $23.52$ & $0.745$ & $0.297$ & $87.53$ \\
    & DDNM & $29.91$ & $0.817$ & $0.121$ & $44.37$ & ${31.16}$ & $0.841$ & $0.191$ & $63.84$ \\
    & DCDP & $30.69$ & $0.842$ & $0.142$ & $52.51$ & -- & -- & -- & -- \\
    & FPS-SMC & $28.21$ & $0.823$ & $0.261$ & $61.23$ & $24.52$ & $0.701$ & $0.316$ & $79.12$ \\
    & RED-Diff & $29.73$ & $0.814$ & $0.200$ & $104.19$ & $27.04$ & $0.753$ & $0.226$ & $92.24$ \\
    & DAPS  & $31.12$ & $0.844$ & $0.098$ & $32.17$  & $28.44$ & $0.775$ & $0.135$ & $54.25$\\
    & MAS & $\mathbf{33.10}$ & $\mathbf{0.923}$ & $\mathbf{0.073}$ & ${34.75}$ & $\mathbf{29.05}$ & $\mathbf{0.838}$ & $\mathbf{0.113}$ & $\mathbf{30.19}$ \\
    
    \midrule
    \multirow{3}{*}{Deblurring (Uniform)} 
    & DDNM & $26.58$ & $0.704$ & $0.210$ & $68.83$ & $25.69$ & $0.630$ & $0.261$ & $83.63$ \\
    & DDRM & $29.19$ & $0.835$ & $0.172$ & $87.12$ & $26.31$ & $0.711$ & $0.267$ & $118.36$ \\ 
    & DAPS  & $28.92$ & $0.758$ & $0.204$ & $76.57$ & $25.43$ & $0.616$ & $0.293$ & $103.55$  \\
    & MAS & $\mathbf{30.58}$ & $\mathbf{0.857}$ & $0.174$ & $103.88$ & $26.25$ & $0.700$ & $0.295$ & $141.58$ \\
    \midrule
     \multirow{5}{*}{Color} 
    & DDNM & $24.83$ & $0.868$ & $0.244$ & $85.15$ & $22.57$ & $0.884$ & $0.271$ & $87.48$ \\
    & DDRM & $23.27$ & $0.881$ & $0.250$ & $100.48$ & $21.12$ & $0.819$ & $0.346$ & $103.39$ \\ 
    & RED-Diff & $24.21$ & $0.785$ & $0.304$ & $107.64$ & $22.18$ & $0.782$ & $0.368$ & $104.40$ \\
    & DAPS  & $23.92$ & $0.825$ & $0.263$ & $88.09$ & $22.13$ & $0.830$ & $0.323$ & $89.30$ \\
    & MAS & $24.23$ & $\mathbf{0.919}$ & $\mathbf{0.187}$ & $\mathbf{72.33}$ & $\mathbf{22.66}$ & $\mathbf{0.886}$ & $\mathbf{0.258}$ & $\mathbf{83.17}$ \\
    \midrule
    \bottomrule
  \end{tabular}}
\end{table*}

\subsection{Image restoration}

\textbf{Inverse problems with Gaussian noise (known variance).} Quantitative results for inverse problems with Gaussian noise of known variance are shown in \cref{tab:noisy_gaussian}. MAS consistently demonstrates superior performance across most tasks, notably achieving significantly higher PSNRs. The table summarizes 5 tasks, 4 restoration quality metrics, and 2 datasets, resulting in a total of 40 evaluations. MAS demonstrates superior performance in 29 out of the 40 cases. Notably, MAS achieves improvements of more than 1 dB in 5 out of 10 instances.

\begin{table*}[t]
  \centering
  \caption{Quantitative evaluation of solving linear inverse problems with non-Gaussian noise (unknown strength).} 
  \label{tab:non_gaussian}
  \resizebox{0.9 \textwidth}{!}{%
  \begin{tabular}{@{}llcccccccc@{}}
    \toprule
    \multirow{2}{*}{\textbf{Task}}  & \multirow{2}{*}{\textbf{Method}} 
      & \multicolumn{4}{c}{\textbf{Salt peper noise}} & \multicolumn{4}{c}{\textbf{Periodic noise}} \\ \cmidrule(lr){3-6}\cmidrule(lr){7-10}
      & & \textbf{PSNR} $\uparrow$ & \textbf{SSIM} $\uparrow$  & \textbf{LPIPS} $\downarrow$ & \textbf{FID} $\downarrow$
        & \textbf{PSNR} $\uparrow$ & \textbf{SSIM} $\uparrow$ &  \textbf{LPIPS} $\downarrow$ & \textbf{FID} $\downarrow$ \\ \midrule
    \multirow{5}{*}{SR $8\times$}
      & DDNM & 13.02 & 0.289 & 0.710 & 377.54 & 18.61 & 0.492 & 0.495 & 268.36 \\
      & DDRM & 16.06 & 0.506 & 0.629 & 351.69 & 19.74 & 0.545 & 0.463 & 218.38 \\
      & $\Pi$GDM & 17.36 & 0.476 & 0.569 & 309.73 & 18.12 & 0.449 & 0.434 & 163.41 \\
      & RED-Diff & 14.21 & 0.357 & 0.668 & 342.09 & 19.47 & 0.596 & 0.416 & 224.07 \\
      & MAS (ours) & \textbf{20.05} & \textbf{0.605} & \textbf{0.390} & \textbf{129.80} & \textbf{20.10} & \textbf{0.591} & \textbf{0.395} & \textbf{137.57} \\
     \midrule
    \multirow{5}{*}{Inpaint (Box)}
      & DDNM & 15.55 & 0.248 & 0.533 & 247.99 & 18.60 & 0.621 & 0.341 & 147.80 \\
      & DDRM & 20.27 & 0.599 & 0.350 & 142.01 & 18.74 & 0.589 & 0.423 & 199.14 \\
      & $\Pi$GDM & 19.30 & 0.665 & 0.297 & 100.07 & 18.32 & 0.601 & 0.349 & 150.49 \\
      & RED-Diff & 15.75 & 0.287 & 0.523 & 255.85 & 19.13 & 0.638 & 0.338 & 159.99 \\
      & MAS (ours) & \textbf{22.78} & \textbf{0.723} & \textbf{0.244} & \textbf{90.15} & \textbf{19.13} & 0.581 & 0.407 & \textbf{138.56} \\
    \bottomrule
  \end{tabular}}
\end{table*}

\textbf{Inverse Problems with Non-Gaussian Noise (Unknown Strength).} Quantitative evaluations for linear inverse problems with unknown, non-Gaussian noise are presented in \cref{tab:non_gaussian}. MAS consistently outperforms baseline methods, highlighting the effectiveness of our approach in handling unknown noise conditions.

\begin{figure}
    \centering
    \includegraphics[width=.95\linewidth]{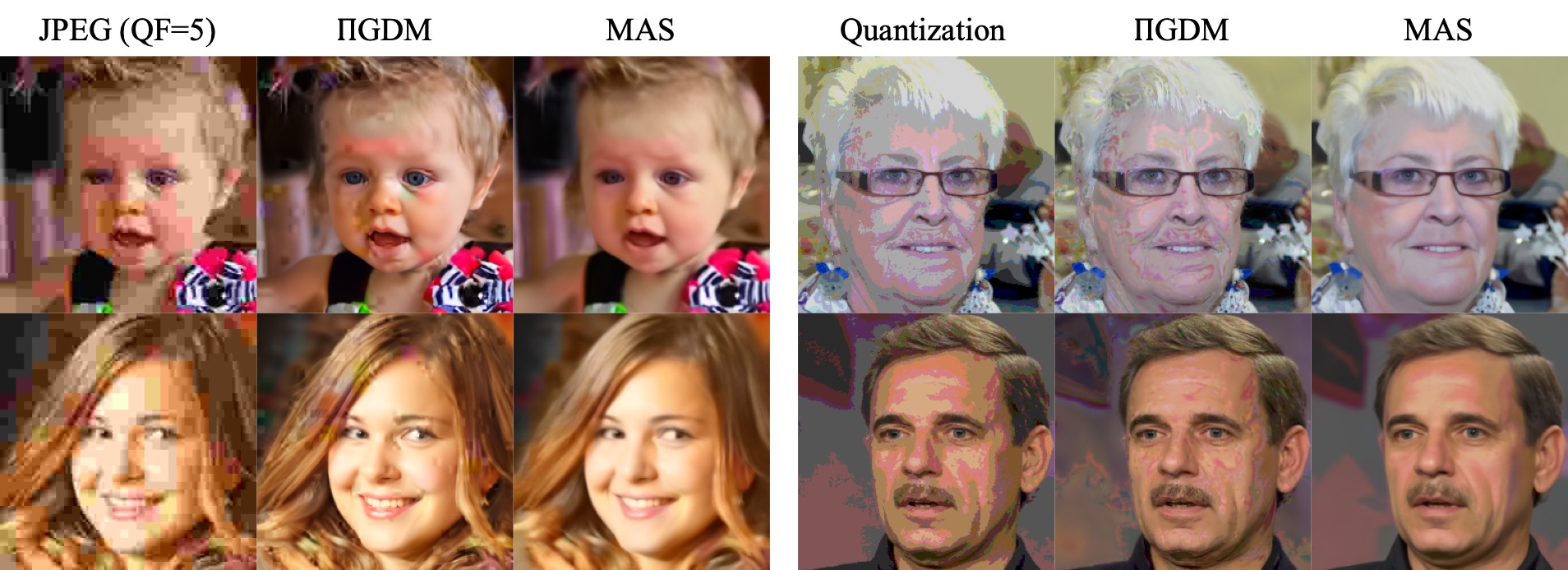}
    \caption{Results on JPEG (QF=5) and quantization restoration.}
    \label{fig:non_linear}
\end{figure}

\textbf{Inverse problems with non-differentiable measurements.} MAS is also capable of solving inverse problems with non-differentiable measurements, such as JPEG restoration and quantization. Results in \cref{tab:jpeg} and \cref{fig:non_linear} show that MAS achieves state-of-the-art performance without relying on the forward operator or knowledge of the degradation strength.

\textbf{Computational time analysis}. The computational efficiency of MAS is comparable to DDNM and substantially higher than DAPS. For example, on the SR task using the FFHQ-256 dataset with 200 diffusion steps, the non-parallel single-image sampling time for both DDNM and MAS is only 8 seconds per image, whereas DAPS requires 67 seconds (test were conducted on the same NVIDIA A6000 GPU).

\begin{table}[t]
    \centering
    \caption{
    Quantitative evaluation of  solving JPEG restoration and Quantization. We set $k = 1.0$ for $\mathrm{QF} = 5$ and $k = 3.0$ for $\mathrm{QF} = 2$, and $k=0.5$ for quantization. Both $\Pi$GDM and MAS use $\mathrm{NFE} = 20$, which yields the best performance (among $\mathrm{NFE} = 20$ and $\mathrm{NFE} = 100$). Notably, our method (MAS) does not require access to the forward operator or the strength of degration.}
    \label{tab:jpeg}
    \setlength{\tabcolsep}{4pt} 
    
    \resizebox{\textwidth}{!}{%
    \begin{tabular}{lcccccccccccc}
    \toprule
    \multirow{2}{*}{\textbf{Method}} 
        & \multicolumn{4}{c}{\textbf{JPEG Restoration (QF = 5)}} 
        & \multicolumn{4}{c}{\textbf{JPEG Restoration (QF = 2)}} 
        & \multicolumn{4}{c}{\textbf{Quantization (number of bits = 2)}} \\
    \cmidrule(lr){2-5} \cmidrule(lr){6-9} \cmidrule(lr){10-13}
        & PSNR $\uparrow$ &  SSIM $\uparrow$ & LPIPS $\downarrow$ & FID $\downarrow$
        & PSNR $\uparrow$ &  SSIM $\uparrow$ & LPIPS $\downarrow$ & FID $\downarrow$
        & PSNR $\uparrow$ &  SSIM $\uparrow$ & LPIPS $\downarrow$ & FID $\downarrow$ \\
    \midrule
    $\Pi$GDM           & $25.78$ & $0.750$ & $0.241$ & $89.82$ & $22.92$ & $0.653$ & $0.314$ & $112.27$ & $29.98$ & $0.823$ & $0.185$ & $124.57$ \\
    MAS (ours)         & $\mathbf{26.30}$ & $\mathbf{0.787}$ & $\mathbf{0.281}$ & $101.24$ & $\mathbf{23.72}$ & $\mathbf{0.772}$ & $\mathbf{0.335}$ & $114.85$ & $28.97$ & $\mathbf{0.837}$ & $\mathbf{0.196}$ & $\mathbf{69.61}$ \\
    \bottomrule
    \end{tabular}
    }
\end{table}

\section{Related work}

Diffusion models have also been successfully applied to linear inverse problems, including, compressed-sensing MRI (CS-MRI), and computed tomography (CT) \citep{kadkhodaie2021stochastic, song2020score, chung2022improving, kawar2022denoising, song2021solving}. They have also been extended to non-linear inverse problems such as Fourier phase retrieval, nonlinear deblurring, HDR, and JPEG restoration \citep{chung2022diffusion, song2023pseudoinverse, chung2023parallel, mardani2023variational}.

Methods to solve inverse problems include linear projection methods \citep{wang2022zero, kawar2022denoising, dou2024diffusion}, Monte Carlo sampling \citep{wu2023practical, phillips2024particle}, variational inference \citep{feng2023score, mardani2023variational, janati2024divide},  optimization-based approaches \citep{song2023solving, zhu2023denoising, li2024decoupled, wang2024dmplug, alkhouri2024sitcom, he2023manifold}, and Diffusion Posterior Sampling (DPS) \citep{zhang2024improving, chung2022diffusion, song2023loss, yu2023freedom, rout2024beyond, yang2024guidance, bansal2023universal, boys2023tweedie, song2023pseudoinverse, ho2022classifier}. Besides, InverseBench \citep{zheng2025inversebench} presents a benchmark for critical scientific applications, which present structural challenges that differ significantly from natural image restoration tasks. 




    

\section{Conclusion}
\label{sec:conclusion}
MAS improves upon existing methods by explicitly aligning the sampling process with measurement data, offering a broader optimization perspective that generalizes approaches like DDNM and DAPS. Beyond the noise-free case, MAS can be extended to: (1) known Gaussian noise, (2) unknown or non-Gaussian noise through adaptive parameterization, and (3) non-differentiable degradations (e.g., JPEG) by decoupling the forward operator from sampling. Extensive experiments show that MAS consistently outperforms state-of-the-art methods across a wide range of inverse problems. While MAS can handle non-differentiable measurements like JPEG restoration, it does not support general non-linear inverse problems, it's also promising to `calibrate' the noise introduced into $x_t$, such that maximizing the consistency to measurement.



\newpage

\bibliography{references}
\bibliographystyle{plain}

\newpage
\appendix

\section{Proofs}
\label{sec:proofs}

\subsection{Proof of \cref{prop:blr}.}
\begin{proof}
Let $x \equiv x_{\epsilon}$. The prior and likelihood are
\[
p(x \mid x_t) = \mathcal N(m_{0 \mid t},\,C_{0 \mid t}), 
\qquad
p(y \mid x) = \mathcal N(Hx,\,R),
\]
with $R = \sigma_y^2 I_m + \sigma_{\epsilon}^2 HH^\top$.
Denote $m \coloneqq m_{0 \mid t}$ and $C \coloneqq C_{0 \mid t}$.

The posterior is, up to normalization,
\[
p(x \mid x_t,y)\;\propto\; \exp\!\left(
-\tfrac12 (x-m)^\top C^{-1}(x-m)
-\tfrac12 (y-Hx)^\top R^{-1}(y-Hx)
\right).
\]

Expanding the exponent and collecting terms in $x$ gives
\[
\begin{aligned}
&-\tfrac12 \Big[
x^\top C^{-1} x - 2 x^\top C^{-1} m + m^\top C^{-1} m
+ x^\top H^\top R^{-1} H x - 2 x^\top H^\top R^{-1} y + y^\top R^{-1} y
\Big] \\
&= -\tfrac12 \Big[
x^\top (C^{-1}+H^\top R^{-1} H) x
- 2 x^\top (C^{-1} m + H^\top R^{-1} y)
\Big] + \text{(terms independent of $x$)}.
\end{aligned}
\]

This is the quadratic form of a Gaussian in $x$ with precision
\[
\Lambda \;=\; C^{-1} + H^\top R^{-1} H,
\]
and natural parameter
\[
\eta \;=\; C^{-1} m + H^\top R^{-1} y.
\]

Therefore the posterior is Gaussian $\mathcal N(\mu_{\text{post}},\,\Sigma_{\text{post}})$ with
\[
\Sigma_{\text{post}} = \Lambda^{-1}
= \big(C^{-1} + H^\top R^{-1} H\big)^{-1},\qquad
\mu_{\text{post}} = \Sigma_{\text{post}}\,\eta
= \big(C^{-1} + H^\top R^{-1} H\big)^{-1}\big(C^{-1} m + H^\top R^{-1} y\big).
\]

Restoring the original notation gives \eqref{eq:blr}. 
\end{proof}

\subsection{Proof of effieient linear solves in \cref{eq:eff_solves}}

\begin{lemma}\label{lem:svd_solves}
Let \(H\in\mathbb{R}^{m\times d}\) have (thin) singular–value decomposition
\(H = U\Sigma V^{\mathsf T}\) with orthogonal \(U\in\mathbb{R}^{m\times m}\),
\(V\in\mathbb{R}^{d\times d}\) and 
\(\Sigma=\operatorname{diag}(s_1,\dots,s_{r})\in\mathbb{R}^{m\times d}\),  
where \(r=\operatorname{rank}(H)\) and \(s_1\ge\cdots\ge s_r>0\).
For any scalars \(\eta_1\ge 0\) and \(\eta_2>0\) define
\[
   W \coloneqq \eta_1HH^{\mathsf T}+\eta_2\mathbb{I},
   \qquad
   Y \coloneqq \mathbb{I} + H^{\mathsf T}W^{-1}H.
\]
Then
\begin{equation}
\label{eq:W_and_Y_inverse}
  W^{-1}=U\,
      \operatorname{diag}\!\Bigl(\tfrac{1}{\eta_1s_i^{2}+\eta_2}\Bigr)_{i=1}^{m}
      U^{\mathsf T},
  \qquad
  Y^{-1}=V\,
      \operatorname{diag}\!\Bigl(
          \tfrac{1}{\,1+s_i^{2}/(\eta_1s_i^{2}+\eta_2)\,}
      \Bigr)_{i=1}^{d}
      V^{\mathsf T}.
\end{equation}
(When \(i>r\) we set \(s_i\!=\!0\).)
\end{lemma}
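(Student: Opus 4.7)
The plan is to exploit the spectral decomposition of $H$ and push everything into the orthonormal bases $U$ and $V$, so that both $W$ and $Y$ become diagonal and their inverses collapse to scalar reciprocals along each singular direction. Nothing deeper than orthogonality of $U,V$ and the fact that scaled identities commute with everything is needed.

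First I would rewrite $W$ in the $U$-basis. Substituting $H=U\Sigma V^{\mathsf T}$ and using $V^{\mathsf T} V=\mathbb{I}_d$ gives $HH^{\mathsf T}=U(\Sigma\Sigma^{\mathsf T})U^{\mathsf T}$, where $\Sigma\Sigma^{\mathsf T}\in\mathbb{R}^{m\times m}$ is diagonal with entries $s_i^{2}$ (and $0$ for $i>r$). Writing $\eta_2\mathbb{I}_m=U(\eta_2\mathbb{I}_m)U^{\mathsf T}$ and combining, I obtain $W=U\,\operatorname{diag}(\eta_1 s_i^{2}+\eta_2)\,U^{\mathsf T}$. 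Since $\eta_2>0$ every diagonal entry is strictly positive, so $W$ is invertible and inverting entrywise yields the first identity in \eqref{eq:W_and_Y_inverse}.

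Next I would plug this formula into $Y=\mathbb{I}_d+H^{\mathsf T}W^{-1}H$. Using $U^{\mathsf T}U=\mathbb{I}_m$, the cross-term becomes
\begin{equation*}
H^{\mathsf T}W^{-1}H = V\,\Sigma^{\mathsf T}\operatorname{diag}\!\Bigl(\tfrac{1}{\eta_1 s_i^{2}+\eta_2}\Bigr)\Sigma\,V^{\mathsf T},
\end{equation*}
and the inner triple product $\Sigma^{\mathsf T}\operatorname{diag}(\cdot)\Sigma\in\mathbb{R}^{d\times d}$ is diagonal with entries $s_i^{2}/(\eta_1 s_i^{2}+\eta_2)$ for $i\le r$ and $0$ otherwise. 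Absorbing $\mathbb{I}_d=VV^{\mathsf T}$ into the same basis gives $Y=V\,\operatorname{diag}\!\bigl(1+s_i^{2}/(\eta_1 s_i^{2}+\eta_2)\bigr)V^{\mathsf T}$, whose entries are all $\ge 1$, hence invertible; entrywise reciprocals produce the second identity in \eqref{eq:W_and_Y_inverse}.

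The argument is essentially bookkeeping, and I do not anticipate a real obstacle. The only point to be careful about is the rectangular shape of $\Sigma$: $\Sigma\Sigma^{\mathsf T}$ is $m\times m$ while $\Sigma^{\mathsf T}\Sigma$ is $d\times d$, and in both cases the zero padding (for $i>r$) must be carried along so that the diagonal formulas remain valid in the rank-deficient and non-square cases covered by the lemma.
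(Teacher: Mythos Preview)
Your proposal is correct and follows essentially the same approach as the paper: conjugate $W$ and $Y$ into the $U$- and $V$-bases via the SVD, observe that both become diagonal, and invert entrywise. If anything, you are slightly more careful than the paper in flagging the rectangular-$\Sigma$ bookkeeping and the positivity of the diagonal entries.
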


\begin{proof}
\textbf{(i) Inverting \(W\).}
Using the SVD,
\[
   W
   = \eta_1U\Sigma\Sigma^{\mathsf T}U^{\mathsf T}+\eta_2U\mathbb{I}U^{\mathsf T}
   = U\!\bigl(\eta_1\Sigma\Sigma^{\mathsf T}+\eta_2\mathbb{I}\bigr)U^{\mathsf T}.
\]
Because \(U\) is orthogonal, \(W^{-1}\) is obtained by inverting the
diagonal middle matrix:
\(
   (\eta_1\Sigma\Sigma^{\mathsf T}+\eta_2\mathbb{I})^{-1}
   =\operatorname{diag}\!\bigl(\frac{1}{\eta_1s_i^{2}+\eta_2}\bigr)_{i=1}^{m}.
\)
Substituting yields the first identity in \eqref{eq:W_and_Y_inverse}.

\smallskip
\noindent\textbf{(ii) Inverting \(Y\).}
Write
\[
   Y = \mathbb{I} + H^{\mathsf T}W^{-1}H
     = V\Sigma^{\mathsf T}U^{\mathsf T}
       \bigl[U\operatorname{diag}\!\bigl(\tfrac{1}{\eta_1s_i^{2}+\eta_2}\bigr)U^{\mathsf T}\bigr]
       U\Sigma V^{\mathsf T} + \mathbb{I},
\]
and simplify with \(U^{\mathsf T}U=\mathbb{I}\):
\[
   Y
   = V\!\Bigl[\Sigma^{\mathsf T}\!
       \operatorname{diag}\!\bigl(\tfrac{1}{\eta_1s_i^{2}+\eta_2}\bigr)\!
       \Sigma + \mathbb{I}\Bigr] V^{\mathsf T}.
\]
Because \(\Sigma^{\mathsf T}\!
       \operatorname{diag}\!\bigl(\tfrac{1}{\eta_1s_i^{2}+\eta_2}\bigr)
       \Sigma\) is diagonal with \(i^{\text{th}}\) entry
\(\tfrac{s_i^{2}}{\eta_1s_i^{2}+\eta_2}\),
the bracketed matrix is diagonal and hence trivial to invert, giving
the second identity in \eqref{eq:W_and_Y_inverse}.
\end{proof}

\subsection{Proof of \cref{eq:opt_per}}

\begin{proposition}\label{prop:pm_closed_form}
Let $H\in\mathbb{R}^{m\times d}$, $\eta_1\ge 0$ and $\eta_2>0$.
Define
\[
   W \;=\; \eta_1\,HH^{\mathsf T} + \eta_2\,\mathbb{I},
   \qquad
   Y \;=\; \mathbb{I} + H^{\mathsf T}W^{-1}H.
\]
For any $y\in\mathbb{R}^m$ and $m_{0 \mid t}\in\mathbb{R}^d$ consider the
strictly convex quadratic
\[
   \mathcal{L}(x_0) \;=\;
      \bigl\| x_0-m_{0 \mid t}\bigr\|_2^{2}
      \;+\;
      \bigl\|y-H\tilde x_0\bigr\|_{W^{-1}}^{2},
   \qquad
   \|v\|_{W^{-1}}^{2}=v^{\mathsf T}W^{-1}v.
\]
Its unique minimiser is
\begin{equation}\label{eq:x_pm}
   \tilde x_0^*
   \;=\;
   Y^{-1}\!\bigl[m_{0 \mid t}+H^{\mathsf T}W^{-1}y\bigr].
\end{equation}
\end{proposition}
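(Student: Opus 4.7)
The plan is to exploit the fact that $\mathcal{L}$ is a strictly convex quadratic in $\tilde x_0$, so a unique minimizer exists and is characterized by the first-order condition $\nabla \mathcal{L}(\tilde x_0) = 0$. Before computing the gradient, I would record two invertibility facts that are needed for the statement to even make sense: (i) since $\eta_1\,HH^{\mathsf T}\succeq 0$ and $\eta_2>0$, the matrix $W=\eta_1HH^{\mathsf T}+\eta_2\mathbb{I}_m$ is symmetric positive definite, hence $W^{-1}$ exists and is symmetric positive definite, which makes the Mahalanobis term $\|v\|_{W^{-1}}^{2}=v^{\mathsf T}W^{-1}v$ well-defined; (ii) consequently $H^{\mathsf T}W^{-1}H\succeq 0$, so $Y=\mathbb{I}_d+H^{\mathsf T}W^{-1}H\succ 0$ is invertible, which legitimates the $Y^{-1}$ appearing in \eqref{eq:x_pm}.

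Next I would expand $\mathcal{L}$ into standard matrix-quadratic form,
\begin{equation}
\mathcal{L}(\tilde x_0)
= \tilde x_0^{\mathsf T}\bigl(\mathbb{I}_d+H^{\mathsf T}W^{-1}H\bigr)\tilde x_0
- 2\,\tilde x_0^{\mathsf T}\bigl(\hat x_0^{\theta}+H^{\mathsf T}W^{-1}y\bigr)
+ \mathrm{const.},
\end{equation}
where symmetry of $W^{-1}$ ensures the cross terms from expanding $(y-H\tilde x_0)^{\mathsf T}W^{-1}(y-H\tilde x_0)$ combine cleanly. The Hessian $2Y\succ 0$ then certifies strict convexity and hence uniqueness of the minimizer. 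Setting the gradient to zero,
\begin{equation}
\nabla\mathcal{L}(\tilde x_0)
= 2(\tilde x_0-\hat x_0^{\theta}) - 2H^{\mathsf T}W^{-1}(y-H\tilde x_0) = 0,
\end{equation}
I would rearrange as $(\mathbb{I}_d+H^{\mathsf T}W^{-1}H)\,\tilde x_0 = \hat x_0^{\theta}+H^{\mathsf T}W^{-1}y$, i.e.\ $Y\tilde x_0 = \hat x_0^{\theta}+H^{\mathsf T}W^{-1}y$, and then left-multiply by $Y^{-1}$ to obtain the claimed closed form \eqref{eq:x_pm}.

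There is no serious obstacle here: the argument is a textbook first-order optimality calculation for a strictly convex quadratic, and the only bookkeeping subtleties are the symmetry of $W^{-1}$ (handled above) and the invertibility of $Y$ (which follows from $\eta_2>0$). As a sanity check, I would finally note that \eqref{eq:x_pm} coincides with the posterior mean $\mu_{\mathrm{post}}$ in \cref{prop:blr} after substituting $\eta_1=r_t^{2}/\rho_t^{2}$ and $\eta_2=\sigma_y^{2}/\rho_t^{2}$ and factoring $\rho_t^{2}$ out of the likelihood covariance, reinforcing the optimization/Bayesian correspondence emphasized in \cref{sec:opt_pers}.
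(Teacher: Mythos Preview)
Your proposal is correct and follows essentially the same route as the paper: compute the gradient of the quadratic, set it to zero to obtain the normal equation $Y\tilde x_0=\hat x_0^{\theta}+H^{\mathsf T}W^{-1}y$, and invoke $\eta_2>0\Rightarrow W\succ 0\Rightarrow Y\succ 0$ both to invert $Y$ and to certify strict convexity via the Hessian $2Y$. The only differences are cosmetic---you front-load the invertibility bookkeeping and append a sanity check against \cref{prop:blr}, whereas the paper records positivity of $W$ and $Y$ after deriving the normal equation.
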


\begin{proof}
Expand $\mathcal{L}$ and take its gradient:
\[
   \nabla_{\tilde x_0}\mathcal{L}
   \;=\;
   2\bigl(\tilde x_0-m_{0 \mid t}\bigr)
   \;-\;
   2H^{\mathsf T}W^{-1}\!\bigl(y-H\tilde x_0\bigr).
\]
Setting $\nabla_{\tilde x_0}\mathcal{L}=0$ gives the \emph{normal equation}
\[
   \bigl(\mathbb{I}+H^{\mathsf T}W^{-1}H\bigr)\tilde x_0
   \;=\;
   m_{0 \mid t}+H^{\mathsf T}W^{-1}y,
   \qquad\text{that is, } Y\,\tilde x_0 = m_{0 \mid t}+H^{\mathsf T}W^{-1}y.
\]
Because $\eta_2>0$ implies $W\succ 0$, we have $W^{-1}\succ 0$ and hence
$Y=\mathbb{I}+H^{\mathsf T}W^{-1}H\succ 0$; thus $Y$ is invertible and
\eqref{eq:x_pm} follows.

Finally, the Hessian of $\mathcal{L}$ is $2Y\succ 0$, so
$\mathcal{L}$ is strictly convex and the stationary point \eqref{eq:x_pm}
is indeed its unique global minimiser.
\end{proof}

\subsection{Proof of \cref{remark:con_ddnm}.}

\begin{proof}

As $\eta_2 = 0$,

\begin{equation}
    x_0^* = \left(\eta\, \mathbb{I} + H^{\dagger} H\right)^{-1} \left(\eta_1\, m_{0 \mid t} + H^{\dagger} y\right).
\end{equation}

To analyze the limit as \(\eta_1 \to 0\), decompose the space into two orthogonal components:
\begin{itemize}
    \item The range (or row space) of \(H\), on which \(H^{\dagger} H\) acts as the identity.
    \item Its nullspace, on which \(H^{\dagger} H\) is zero.
\end{itemize}

Let
\begin{equation}
    P = H^{\dagger} H,
\end{equation}
which is the orthogonal projection onto the row space of \(H\). Then any vector \(v\) can be decomposed as
\begin{equation}
    v = P v + (I-P)v.
\end{equation}
Notice that \(H^{\dagger}y\) lies in the row space (i.e. \(P\,H^{\dagger}y = H^{\dagger}y\)) and that \(m_{0 \mid t}\) can be decomposed as
\begin{equation}
    m_{0 \mid t} = Pm_{0 \mid t} + (I-P)m_{0 \mid t}.
\end{equation}

Since the eigenvalues of \(P\) are 0 and 1, the matrix \(\eta_1\, I + P\) has eigenvalues \(\eta_1\) (on the nullspace of \(P\)) and \(1+\eta_1\) (on the row space). Hence, its inverse acts as:
\begin{itemize}
    \item Multiplication by \(1/\eta_1\) on the nullspace,
    \item Multiplication by \(1/(1+\eta_1)\) on the row space.
\end{itemize}

Thus, we have
\begin{equation}
    \left(\eta_1\, I + P\right)^{-1}\left(H^{\dagger} y + \eta_1\, m_{0 \mid t}\right)
    = \frac{1}{1+\epsilon}\bigl(H^{\dagger}y + \eta_1\, Pm_{0 \mid t}\bigr) + \frac{1}{\eta_1}\bigl(\eta_1\,(I-P)m_{0 \mid t}\bigr).
\end{equation}
Simplify this to obtain
\begin{equation}
    \frac{1}{1+\eta_1}\, H^{\dagger}y + \frac{\eta_1}{1+\eta_1}\, Pm_{0 \mid t} + (I-P)m_{0 \mid t}.
\end{equation}
Now, taking the limit as \(\eta_1\to 0\):
\begin{itemize}
    \item \(\frac{1}{1+\eta_1}\to 1\),
    \item \(\frac{\eta_1}{1+\eta_1}\to 0\).
\end{itemize}

Therefore, the limit becomes
\begin{equation}
    \lim_{\eta_1\to 0}x_0^* = H^{\dagger}y + (I-P)m_{0 \mid t}.
\end{equation}
Recalling that \(P = H^{\dagger} H\), we rewrite this as
\begin{equation}
    H^{\dagger}y + m_{0 \mid t} - H^{\dagger} H\,m_{0 \mid t} = m_{0 \mid t} + H^{\dagger}\bigl(y - Hm_{0 \mid t}\bigr).
\end{equation}

Thus, in the limit where \(\eta_1\to 0\), we indeed have
\begin{equation}
    x_0^* = m_{0 \mid t} + H^{\dagger}\bigl(y - Hm_{0 \mid t}\bigr).
\end{equation}

This shows that, as the relative measurement noise \(\epsilon\) becomes much smaller compared to the prior uncertainty \(r_t\), the posterior expectation is the projection of \(\hat{x}_0^\theta\) onto the subspace \(\{ x: Hx = y \}\).
\end{proof}

\section{Additional method details}
\label{sec:sm}



\subsection{Addressing Gaussian noise}

Consider noisy image restoration problems in the form of $y = Hx + 
\color{blue}{\epsilon_y}$, where $\epsilon_y$ is the added noise. Then the measurement $y$ can be decomposed to the sum of clean measurement $y^{\text{clean}}\coloneqq Hx$ and measurement noise $\epsilon_y$. Calculating $x_0^*$ leads to:

\begin{align}
    x_0^* &= Y^{-1}[m_{0 \mid t} + H^{\mathsf T} W^{-1}y] \\
    &=  m_{0 \mid t} + (Y^{-1} - \mathbb{I}) m_{0 \mid t} + Y^{-1} H^TW^{-1} y 
\end{align}

where $\textcolor{blue}{Y^{-1}H^TW^{-1}\epsilon_y}$ is the extra noise introduced into $x_0^*$ and will be further introduced into $x_{t-\Delta t}$. 




To address Gaussian noise with known variance, we modify \cref{eq:pm} and \cref{eq:reverse_iter} as:

\begin{equation}
x_0^* = m_{0 \mid t} + \textcolor{blue}{\Sigma_t} [(Y^{-1} - \mathbb{I} ) m_{0 \mid t} + H^{\mathsf T} W^{-1}y]
\end{equation}

\begin{equation}
    x_{t - \Delta t} \sim \mathcal{N} (a_t \tilde{x}_0^{\text{pe}}(t, x, y) + b_t x_t, \textcolor{blue}{\Phi}_t \mathbb{I})
\end{equation}

Then $x_0^*$ is:

\begin{align}
x_0^* &= m_{0 \mid t} + \textcolor{blue}{\Sigma_t} [(Y^{-1} - \mathbb{I} ) m_{0 \mid t} + H^{\mathsf T} W^{-1}y] \\
    \\
    &= \underbrace{ m_{0 \mid t} + \textcolor{blue}{\Sigma_t} (Y^{-1} - \mathbb{I}) m_{0 \mid t} + Y^{-1} H^TW^{-1} y^{\text{clean}} }_{\coloneqq\; \tilde{x}_0^{\text{clean}}} + \textcolor{blue}{\Sigma_tY^{-1}H^{\mathsf T}W^{-1}\epsilon_y}
\end{align}

Then the iteration of the sampling process is:

\begin{align}
    x_{t - \Delta t} &= a_t x_0^*(t, x, y) + b_t x_t  + \epsilon_{\text{new}} , \quad \epsilon_{\text{new}} \sim \mathcal{N} (0, \Phi_t)\\
    &= a_t \tilde{x}_0^{\text{clean}} + b_t x_t + \underbrace{\textcolor{blue}{a_t \sigma_y  Y^{-1} H^{\mathsf T}W^{-1} \epsilon_y}}_{ \coloneqq \epsilon_{\text{intro}}}  + \epsilon_{\text{new}}
\end{align}

Suppose $\Sigma_t = V\mathrm{diag} \{ \lambda_{t1}, \cdots, \lambda_{td} \} V^T$ $\Phi_t = V \mathrm{diag} \{ \gamma_{t1}, \cdots \gamma_{td} \} V^T$. Then the introduced noise $\epsilon_{\text{intro}} = \textcolor{blue}{ a_t \sigma_y  Y^{-1} H^{\mathsf T}W^{-1} \epsilon_y}$ is still a Gaussian distribution: $\epsilon_{\text{intro}} \sim \mathcal{N} (0,  V D_t V^T )$, with $D_t = diag \{d_{t1}, \cdots, d_{td} \}$:
\begin{equation}
    d_{ti} \;=\;
    \begin{cases}
    \displaystyle
\frac{a_t^{2}\,\sigma_y^{2}\, s_i^{2} \lambda_{ti}^2}{\bigl[(\eta_{1}+1)\,s_i^{2}+\eta_{2}\bigr]^{2}}, 
    & s_i \neq 0,\\[8pt]
    0,
    & s_i = 0,
  \end{cases}
\end{equation}

The choice of and $\Phi_t$ need to ensure the total noise injected to $x_{t - \Delta t}$ conforms the iteration in \cref{eq:reverse_iter}. 

\begin{equation}
   \epsilon_{\text{new}} + \epsilon_{\text{intro}} \sim \mathcal{N} (0, c_t^2 \mathbb{I}) 
\end{equation}

To construct $\epsilon_{\text{new}}$, we define a new diagonal matrix $\Gamma_t( = diag \{ \gamma_{t1}, \cdots \gamma_{td} \})$:

\begin{equation}
    \gamma_{ti} \;=\;
    \begin{cases}
    \displaystyle
    c_t^2 - \frac{a_t^{2}\,\sigma_y^{2}\, s_i^{2} \lambda_{ti}^2}{\bigl[(\eta_{1}+1)\,s_i^{2}+\eta_{2}\bigr]^{2}},
    & s_i \neq 0,\\[8pt]
    c_t^2,
    & s_i = 0,
  \end{cases}
\end{equation}

Now we can yield $\epsilon_{\text{new}}$ by sampling from $\mathcal{N} (0, V\Gamma_tV^T)$ to ensure that $\epsilon_{\text{intro}} + \epsilon_{\text{new}} \sim \mathcal{N} (0, c_t^2 \mathbb{I})$. We need to make sure $\lambda_{ti}$ guarantees the noise level of the introduced noise does not exceed the pre-defined noise level $c_t$, we also hope $\lambda_{t_i}$ as close as $1$ as possible. Therefore, 

\begin{equation}
    \lambda_{ti} \;=\;
    \begin{cases}
    \displaystyle
    1, & c_t \geq \frac{a_t\,\sigma_y\, s_i }{(\eta_{1}+1)\,s_i^{2}+\eta_{2}} , \\
    \frac{c_t ((\eta+1) s_i^2 + \eta_2)}{a_t \sigma_y s_i}, 
    & c_t < \frac{a_t\,\sigma_y\, s_i }{(\eta_{1}+1)\,s_i^{2}+\eta_{2}} ,\\[8pt]
    1,
    & s_i = 0.
  \end{cases}
\end{equation}

In practice, we found that setting $\sigma_y$ slightly larger than the true $\sigma_y$ is beneficial, possibly because the denoiser is more sensitive to excessive noise.

\subsection{Efficient calculation via SVD decomposition}
\label{sec:eff_cal}

Let \(H=U\Sigma V^{\mathsf T}\) with singular values
\(s_1,\dots,s_n\).
Then  

\begin{equation}
W^{-1}
=U\,\mathrm{diag}\!\Bigl(\tfrac{1}{\eta_1s_i^{2}+\eta_2}\Bigr)U^{\mathsf T},
\qquad
Y^{-1}
  =V\,\mathrm{diag}\!\Bigl(
      \tfrac{1}{\,1+s_i^{2}/(\eta_1s_i^{2}+\eta_2)\,}
    \Bigr)V^{\mathsf T},
    \label{eq:eff_solves}
\end{equation}

see \cref{sec:proofs} for the proof. Hence both \(W^{-1}v\) and \(Y^{-1}u\) reduce to inexpensive diagonal scalings in the SVD basis, avoiding the calculation of any explicit matrix inversion or square-root. The algorithm of MAS for inverse problem is provided in \cref{algo:mas}.

As $\eta_1 < 0$, $W$ could be non-invertible. However, $W=U \text{diag}(\eta_1 s_1^2, \cdots, \eta_1 s_r^2, \eta_2,\cdots, \eta_2)U^T$. Hence $W$ is invertible if $\eta_1s_i^2+\eta_2 \neq 0$ for every $i$. Even when $\eta_2 = 0$ and $\eta_1 < 0$ make $W$ singular, the update $W^{\dagger}y$ uses the Moore-Penrose pseudo-inverse $W^{\dagger}$, which is always well-defined. The pseudo‑inverse acts like an ordinary inverse on the range of $H$ and leaves the null‑space untouched, so the sampler remains stable. Empirically, small negative values ($-0.5< \eta_1 < 0$) often give the visual boost without instability, as demonstrated in the ablation studies in \cref{sec:experiments}

\subsection{Addressing unknown noise and non-differentiable measurements}

As the measurement noise $\epsilon_y$ is non-Gaussian or unknown, it's difficult to ensure the total noise introduced in $x_{t-\Delta t}$ is $\mathcal{N} (0, c_t^2 \mathbb{I})$. In this case, we calculate $x_0^*$ using \cref{eq:pm} and update $x_{t-\Delta t}$ using \cref{eq:reverse_iter}. Then $x_0^*$ is:

\begin{align}
x_0^* &= m_{0 \mid t} + [(Y^{-1} - \mathbb{I} ) m_{0 \mid t} + H^{\mathsf T} W^{-1}y] \\
    \\
    &= \underbrace{ m_{0 \mid t} +  (Y^{-1} - \mathbb{I}) m_{0 \mid t} + Y^{-1} H^TW^{-1} y^{\text{clean}} }_{\coloneqq\; \tilde{x}_0^{\text{clean}}} + \textcolor{blue}{Y^{-1}H^{\mathsf T}W^{-1}\epsilon_y}
\end{align}

where $\textcolor{blue}{Y^{-1}H^TW^{-1}\epsilon_y}$ is the extra noise introduced into $x_0^*$ and will be further introduced into $x_{t-\Delta t}$:

\begin{equation}
    \begin{aligned}
    x_{t-\Delta t} &= a_t x_0^* + b_t x_t + \epsilon_{\text{new}}, \\
    &= a_t \tilde{x}_0^{\text{clean}} + b_t x_t 
    + \underbrace{\textcolor{blue}{a_t Y^{-1}H^{\mathsf T}W^{-1}\epsilon_y}}_{\coloneqq\; \epsilon_{\text{intro}}}
    + \epsilon_{\text{new}},
    \end{aligned}
\end{equation}

where $\epsilon_{\text{new}}$ the noise added by diffusion process, which should be specifically designed to ensure $x_{t-\Delta t}$ is sampled from correct distribution as in \cref{eq:reverse_iter}, i.e., the total noise $\epsilon_{\text{intro}} + \epsilon_{\text{new}} \sim \mathcal{N} (0, c_t^2 \mathbb{I})$. However, as $\epsilon_y$ is unknown noise, we have no information about the introduced noise $\epsilon_{\text{intro}}$. To solve this problem, we made the following principles: (i) despite that fact that we cannot guarantee $\epsilon_{\text{intro}} + \epsilon_{\text{new}} \sim \mathcal{N} (0, c_t^2 \mathbb{I})$, we still hope $\epsilon_{\text{intro}} + \epsilon_{\text{new}}$ is as close to $\mathcal{N} (0, c_t^2 \mathbb{I})$ as possible; (ii) small $\eta_1$ and $\eta_2$ are helpful to maximize the alignment to measurement $y$. Notably, 

\begin{align}
    \epsilon_{\text{intro}} &= a_t  Y^{-1}H^{\mathsf T}W^{-1} \epsilon_y \\
    &= a_t  V \mathrm{diag}\left(\frac{s_i}{(\eta_1 + 1)s_i^2 + \eta_2}\right) U^T \epsilon_y
    \label{eq:ep_intro_2}
\end{align}

$\eta_1$ and $\eta_2$ are two variables that control the noise level of $\epsilon_{\text{intro}}$. In the implementation, we still sample $\epsilon_{\text{new}}$ from Gaussian distribution $\mathcal{N} (0, c_t^2 \mathbb{I})$. Then the problem becomes how to select $\eta_1$ and $\eta_2$ to meet the above 2 principles. For common image restoration tasks like SR, Deblurring, inpainting, Colorization, The maximum eigenvalue value $s_{\max} = \max \{ s_i\} <=1$. Therefore, adjusting $\eta_2$ is more likely to reduce the strength of $\epsilon_{\text{intro}}$.








Fix a tiny baseline $\eta_1 \in [-0.4, 0) \cup (0, 0.1)$, for example, we can choose $\eta_2(t)=k a_t/c_t$, where $k$ is a constant wihch depends on the measurement noise $\epsilon_y$. This meets Principle (i) by bounding the extra noise, and Principle (ii) by keeping both $\eta_1$, $\eta_2$ small enough to preserve measurement alignment.

\section{Limitations}
\label{sec:limitations}

While MAS can, in principle, be generalized to nonlinear inverse problems, explicitly formulating the likelihood term $p(y \mid x_{\epsilon})$ becomes challenging. Developing effective sampling techniques under this setting is a promising direction for future research.

\section{Impact Statement}
\label{sec:impact_statement}
Our method can improve image restoration under challenging noise and degradation conditions, which may benefit applications in medical imaging, scientific visualization, cultural heritage preservation, and general photography. However, it is important to note that as with many generative and restoration models, our method could be misused for malicious image manipulation.

\section{Pytorch-like code implementation}
\label{sec:pytorch_code}
Here we provide a basic PyTorch-Like implementation of the calculation of $x_0^*$ in \cref{eq:pm}, shown in \cref{lst:mas_code}.

\begin{listing}[h]
\begin{minted}[
    linenos,
    fontsize=\small,
    frame=lines,
    baselinestretch=1.0,
    breaklines,
]{python}
@torch.no_grad()
def mas(
    H, x0_hat, y,
    eta_1=-0.2, eta_2=0.0
):
    bs, _, H_img, W_img = x0_hat.shape
    x0_hat = x0_hat.view(bs,  -1)
    y        = y.view( bs, -1)                   # measurement dim m
    ut_y       = H.Ut(y)                         # (bs, m)
    singulars  = H.singulars()                   # (m,)
    nz         = singulars > 0                  # boolean mask
    scale1     = 1.0 / (singulars[nz] ** 2 * eta_1 + eta_2)
    ut_y[:, nz] = ut_y[:, nz] * scale1           # broadcasting OK
    u_y        = H.U(ut_y)                       # (bs, m)
    rhs        = x0_hat + H.Ht(u_y)             # (bs, d)
    
    vt_rhs     = H.Vt(rhs)                    # (bs, d)
    scale2     = 1.0 / (1.0 + singulars[nz]**2 / (singulars[nz]**2 * eta_1 + eta_2))
    vt_rhs[:, nz] = vt_rhs[:, nz] * scale2
    x0_pm    = H.V(vt_rhs)                     # (bs, d)
    x0_pm = x0_pm.view(bs, 3, H_img, W_img)
    return x0_pm
\end{minted}
\caption{PyTorch-like implementation of the calculation of $x_0^*$ in \cref{eq:pm}.}
\label{lst:mas_code}
\end{listing}

\section{Experimental details}
\label{sec:exp_details}
\subsection{Details of the degradation operators}

\textbf{Super-resolution.} We use the downsampler with bicubic kernel as the forward operator.

\textbf{Deblurring.} For deblurring experiments, We use uniform blur kernel to to implement blurring operation.


\textbf{Inpaint (Random).} Random Inpainting uses a generated random mask where each pixel has a 70\% chance of being masked, following the settings in \citep{song2023solving}.


\textbf{Inpaint (box).} We use a fixed square mask of size $128 \times 128$ pixels placed at the center of the image. 

\textbf{Colorization.} We simulate grayscale degradation by applying a fixed linear transformation to each pixel using the matrix $[\frac{1}{3}, \frac{1}{3}, \frac{1}{3}]$, replacing each RGB pixel with its average intensity.

\subsection{Details of the baseline models}

\textbf{Sampler}. Most experiments on diffusion models leverage DDIM \citep{song2020denoising} sampling.

\textbf{DDRM} \citep{kawar2022denoising}. $\eta_B=1.0$, $\eta=0.85$ with DDIM sampler, as advised in the original paper. 

\textbf{$\Pi$GDM} \citep{song2023pseudoinverse}. $\eta=1.0$, with DDIM sampler, as advised in the original paper.

\textbf{Reddiff} \citep{mardani2023variational}. $\lambda=0.25$, with DDIM sampler, as advised in the original paper.  

\textbf{DDNM} \citep{wang2022zero}. $\eta=0.85$, with DDIM sampler, as advised in the original paper.  

\textbf{DAPS} \citep{zhang2024improving}.  $\tau=0.01$, with EDM sampler, as advised in the original paper.

\section{Additional results}
\label{sec:add_results}

\subsection{Computational time}
\label{sec:com_res}

The computational time of MAS on solving inverse problems is shown in \cref{tab:time}. Our model achieves similar efficiency to DDNM and DDRM, demonstrating that MAS introduces minimal overhead while maintaining competitive runtime performance.

\begin{table}[t]
    \centering
    \small
    \caption{Sampling time (Sec) per image of MAS on deblurring and super-resolution with FFHQ 256, evaluated using a single NVIDIA A6000 48G GPU. We set NFE=20 and batch size = 20 for all of the methods. }
    \begin{tabular}{lccccc}
    \toprule
    Method & MAS & $\Pi$GDM & DDNM & DDRM & RED-Diff \\
    \midrule
    Deblurring & 0.128 & 0.278 & 0.127 & 0.127 & 0.119 \\
    SR ($8\times$) & 0.131 & 0.282 & 0.131 &0.131 & 0.125\\
    \bottomrule
    \end{tabular}
    \label{tab:time}
\end{table}

\begin{figure}[h]
    \centering
\includegraphics[width=1.\linewidth]{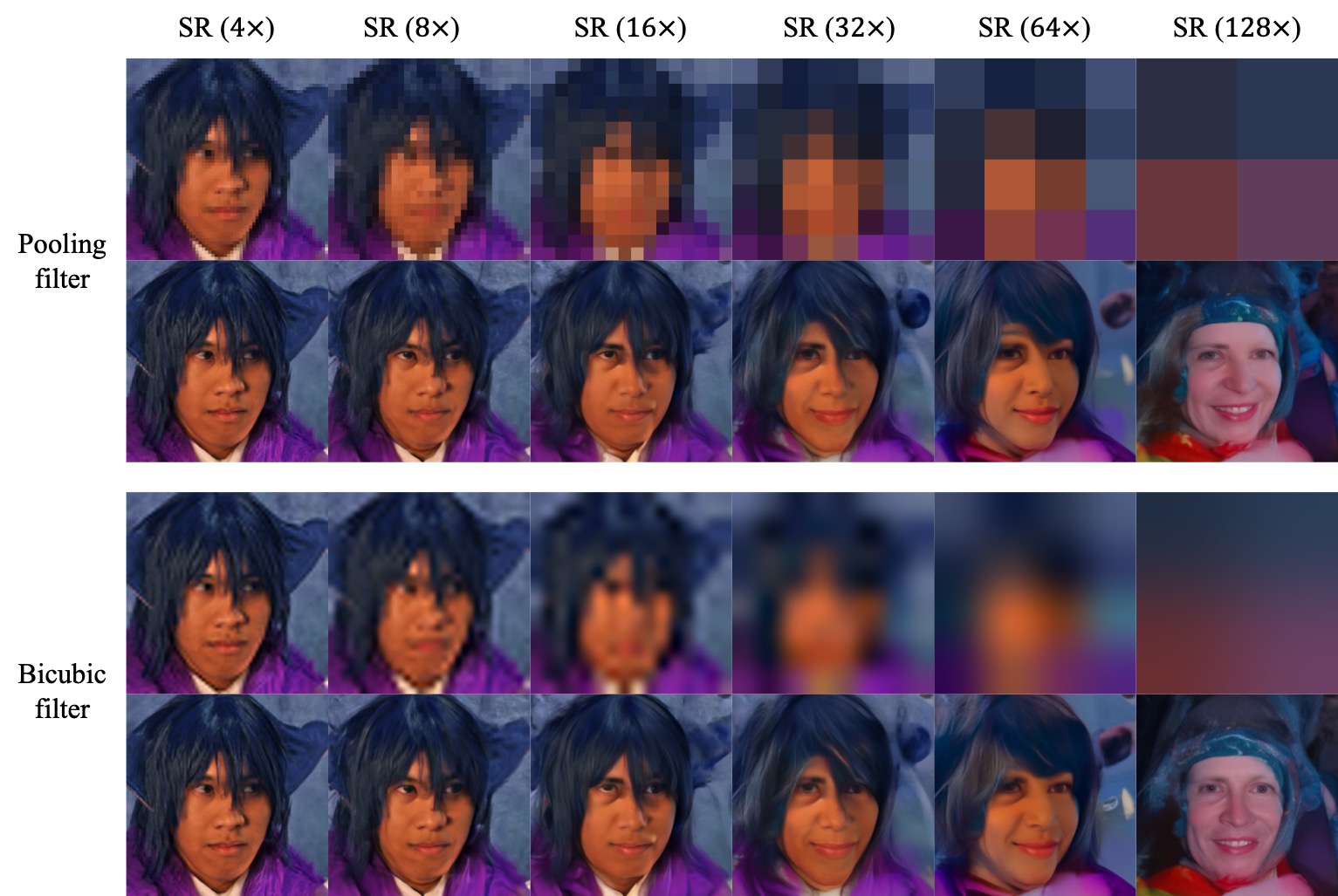}
    \caption{Super-resolution restoration over various strength of degradation. We set $\eta_1 = -0.4$ and $\eta_2 = 0$ for all tasks. For sampling process, we set $\eta=0.6$.}
    \label{fig:sr}
\end{figure}

\subsection{Adding Gaussian noise}

We present the results of solving $8\times$ super-resolution under varying levels of Gaussian noise in \cref{fig:sr_g_noise}. The visualizations demonstrate that MAS maintains strong restoration performance, even under high noise conditions.

\begin{figure}[t]
    \centering
\includegraphics[width=0.95 \linewidth]{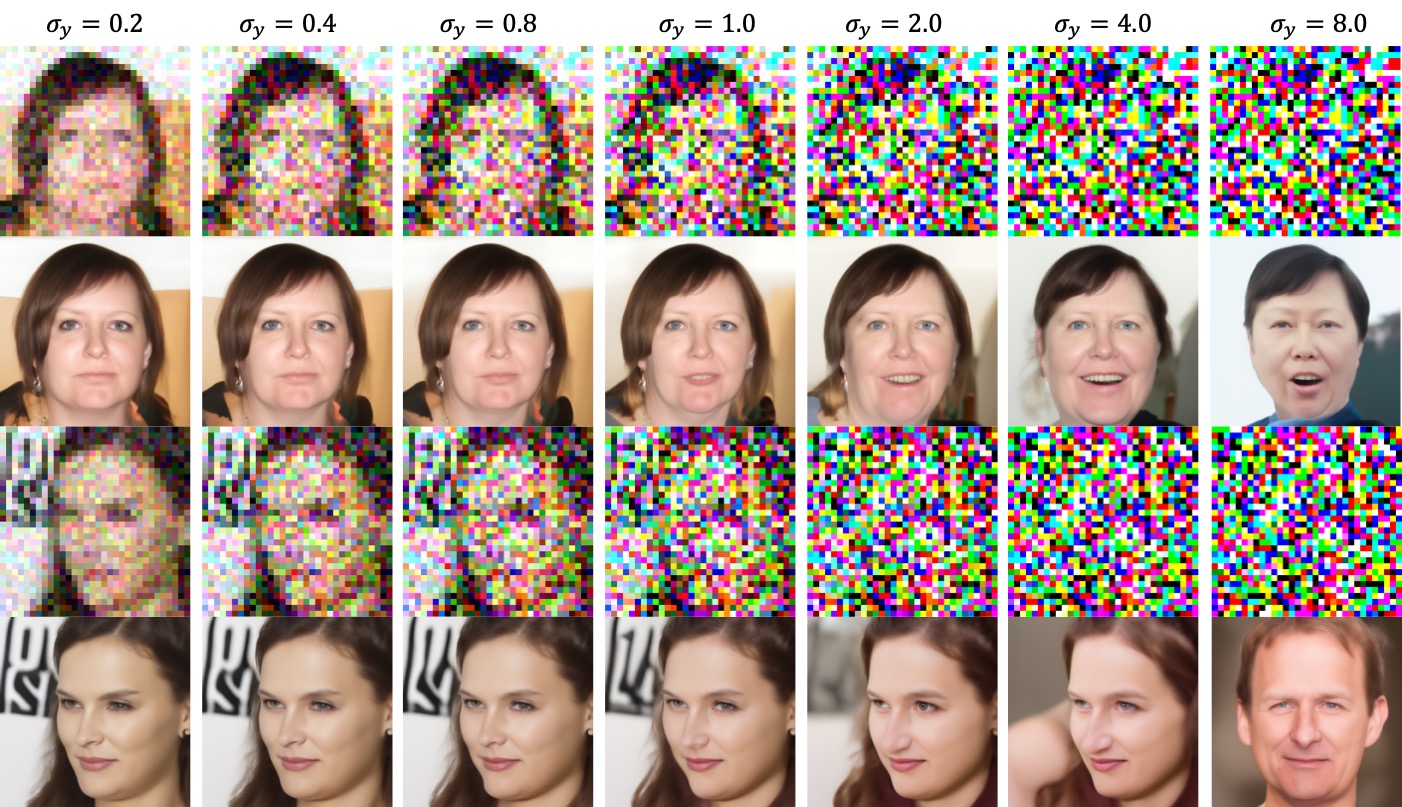}
    \caption{MAS for solving super-resolution ($8 \times$) with various strength of Gaussian noise.}
    
    \label{fig:sr_g_noise}
\end{figure}

\subsection{Adding non-differentiable degragation}

We present the results of solving $8\times$ super-resolution with non-differentiable degradations, including JPEG compression and quantization, in \cref{fig:sr_jpeg}.

\begin{figure}[t]
    \centering
    \includegraphics[width=0.95\linewidth]{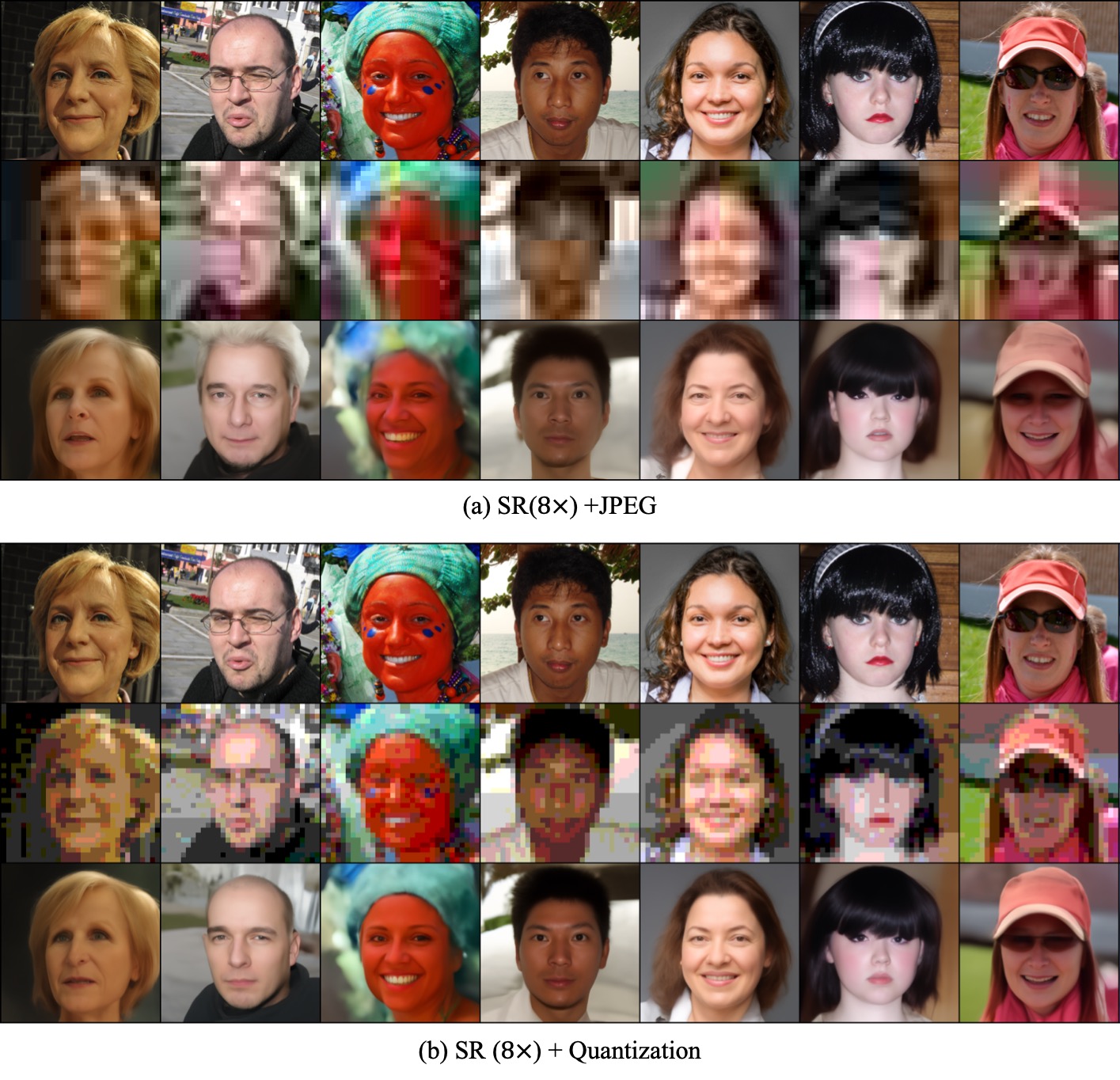}
    \caption{Additional visualization of super-resolution with unknown noise.}
    \label{fig:sr_jpeg}
\end{figure}


\section{Licenses}
\label{sec:licenses}
\textbf{FFHQ Dataset.}  
We use the Flickr-Faces-HQ (FFHQ) dataset released by NVIDIA under the Creative Commons BY-NC-SA 4.0 license. The dataset is intended for non-commercial research purposes only. More details are available at: \url{https://github.com/NVlabs/ffhq-dataset}.

\textbf{ImageNet Dataset.}  
The ImageNet dataset is used under the terms of its academic research license. Access requires agreement to ImageNet's data use policy, and redistribution is not permitted. More information is available at: \url{https://image-net.org/download}.

\end{document}